\def\eqref#1{equation~\ref{#1}}
\def\1{\bm{1}}
\DeclareMathAlphabet{\mathsfit}{\encodingdefault}{\sfdefault}{m}{sl}
\SetMathAlphabet{\mathsfit}{bold}{\encodingdefault}{\sfdefault}{bx}{n}
\def\gA{{\mathcal{A}}}
\def\gG{{\mathcal{G}}}
\def\gP{{\mathcal{P}}}
\def\gR{{\mathcal{R}}}
\def\gS{{\mathcal{S}}}
\def\gT{{\mathcal{T}}}
\def\gU{{\mathcal{U}}}
\definecolor{shadecolor}{gray}{0.95}
\newcolumntype{L}[1]{>{\raggedright\let\newline\\\arraybackslash\hspace{0pt}}m{#1}}
\newcolumntype{C}[1]{>{\centering\let\newline\\\arraybackslash\hspace{0pt}}m{#1}}
\newcolumntype{R}[1]{>{\raggedleft\let\newline\\\arraybackslash\hspace{0pt}}m{#1}}
\newcommand{\cmark}{\ding{51}}%
\newcommand{\xmark}{\ding{55}}%
\newcommand{\fig}[1]{Fig.~\ref{#1}}
\newcommand{\tbl}[1]{Table~\ref{#1}}
\newcommand{\ignore}[1]{}
\DeclareRobustCommand\onedot{\futurelet\@let@token\@onedot}
\def\@onedot{\ifx\@let@token.\else.\null\fi\xspace}
\def\eg{e.g\onedot} 
\def\ie{i.e\onedot} 
\def\etc{etc\onedot}
\def\wrt{w.r.t\onedot}
\definecolor{MyDarkBlue}{rgb}{0,0.08,1}
\definecolor{MyDarkGreen}{rgb}{0.02,0.6,0.02}
\definecolor{MyDarkRed}{rgb}{0.8,0.02,0.02}
\definecolor{MyDarkOrange}{rgb}{0.40,0.2,0.02}
\definecolor{MyPurple}{RGB}{111,0,255}
\definecolor{MyRed}{rgb}{1.0,0.0,0.0}
\definecolor{MyGold}{rgb}{0.75,0.6,0.12}
\definecolor{MyDarkgray}{rgb}{0.66, 0.66, 0.66}
\newcommand{\xhdr}[1]{\noindent\textbf{#1}}
\newcommand{\mycellc}[1]{\begin{tabular}[t]{@{}c@{}l}#1\end{tabular}}
\theoremstyle{plain}%
\newtheorem{definition}{Definition}[section]
\newtheorem{theorem}{Theorem}[section]
\newtheorem{lemma}{Lemma}[section]
\title{What Planning Problems Can\\ A Relational Neural Network Solve?}
\author{%
Jiayuan Mao$^{1}$\\
\And
Tomás Lozano-Pérez$^{1}$
\And
Joshua B. Tenenbaum$^{1,2,3}$
\And
Leslie Pack Kaelbling$^{1}$
\AND
$^1$ MIT Computer Science \& Artificial Intelligence Laboratory\\
$^2$ MIT Department of Brain and Cognitive Sciences\\
$^3$ Center for Brains, Minds and Machines
}
\newcommand{\object}[1]{\text{#1}}
\NewDocumentCommand{\prop}{m >{\SplitList{,}}m}{%
  \ensuremath{\textit{#1}(%
  \ProcessList{#2}{\propositionitem}%
  )}%
  \propositionfirstitemtrue
}
\newif\ifpropositionfirstitem
\newcommand\propositionitem[1]{%
  \ifpropositionfirstitem
    \propositionfirstitemfalse
  \else
    , %
  \fi
  \text{#1}}
\newcommand{\gregress}[3]{\ensuremath{#1 \leftsquigarrow #2~\|~#3}}
\newcommand{\gregressfull}[4]{\ensuremath{#1^#2 \leftsquigarrow #3~\|~#4}}
\newcommand{\pre}{\ensuremath{\textit{pre}}}
\newcommand{\effp}{\ensuremath{\textit{eff}_+}}
\newcommand{\effm}{\ensuremath{\textit{eff}_-}}
\newcommand{\var}[1]{\ensuremath{\textit{#1}}}
\newcommand{\func}[1]{\ensuremath{\textit{#1}}}
\newcommand{\relnn}{\textit{RelNN}\xspace}
\begin{document}
\maketitle

\vspace{-1em}
\begin{abstract}
\vspace{-0.5em}
Goal-conditioned policies are generally understood to be ``feed-forward'' circuits, in the form of neural networks that map from the current state and the goal specification to the next action to take. However, under what circumstances such a policy can be learned and how efficient the policy will be are not well understood. In this paper, we present a circuit complexity analysis for relational neural networks (such as graph neural networks and transformers) representing policies for planning problems, by drawing connections with serialized goal regression search (S-GRS). We show that there are three general classes of planning problems, in terms of the growth of circuit width and depth as a function of the number of objects and planning horizon, providing constructive proofs. We also illustrate the utility of this analysis for designing neural networks for policy learning.
\end{abstract}

\footnotetext{Project page: \url{https://concepts-ai.com/p/goal-regression-width/}}

\vspace{-0.5em}
\section{Introduction}
\vspace{-0.5em}

Goal-conditioned policies are generally understood to be ``feed-forward'' circuits, in the form of neural networks such as multi-layer perceptrons (MLPs) or transformers~\citep{vaswani2017attention}. They take a representation of the current world state and the goal as input, and generate an action as output. Previous work has proposed methods for learning such policies for particular problems via imitation or reinforcement learning~\citep{wang2018nervenet,dong2019neural,li2020towards}; recently, others have tried probing current large-language models to understand the extent to which they already embody policies for a wide variety of problems~\citep{liang2022code,carta2023grounding}.

However, a major theoretical challenge remains. In general, we understand that planning problems are PSPACE-hard with respect to the size of the state space~\citep{bylander1994computational}, but there seem to exist efficient (possibly suboptimal) policies for many problems such as block stacking~\citep{dong2019neural} that generalize to arbitrarily sized problem instances. In this paper, we seek to understand and clarify the circuit complexity of goal-conditioned policies for classes of ``classical'' discrete planning problems: under what circumstances can a polynomial-sized policy circuit be constructed, and what is its size? Specifically, we highlight a relationship between a problem hardness measure ({\it regression width}, related to the notion of ``width'' in the forward planning literature~\citep{chen2007act}) and circuit complexity. We concretely provide upper bounds on the policy circuit complexity as a function of the problem's regression width, using constructive proofs that yield algorithms for directly compiling a planning problem description into a goal-conditioned policy. We further show that such policies can be learned with conventional policy gradient methods from environment interactions only, with the theoretical results predicting the necessary size of the network.

There are several useful implications of these results. First, by analyzing concrete planning domains, we show that for many domains, there do exist simple policies that will generalize to problem instances with an arbitrary number of objects. Second, our theory predicts the circuit complexity of policies for these problems. Finally, our analysis suggests insights into why certain planning problems, such as Sokoban and general task and motion planning (TAMP) problems, are hard, and likely cannot be solved {\it in general} by fixed-sized MLP or transformer-based policies~\citep{culberson1997sokoban,vega2020task}. In the rest of the paper, Section~\ref{sec:background} provides problem definitions, Sections~\ref{sec:width} and \ref{sec:policy} provides complexity definitions, theoretical results, and the policy compilation algorithm; Section~\ref{sec:experiment} discusses the practical implications of these results.

\begin{figure}[tp]
    \centering
    \includegraphics[width=\textwidth]{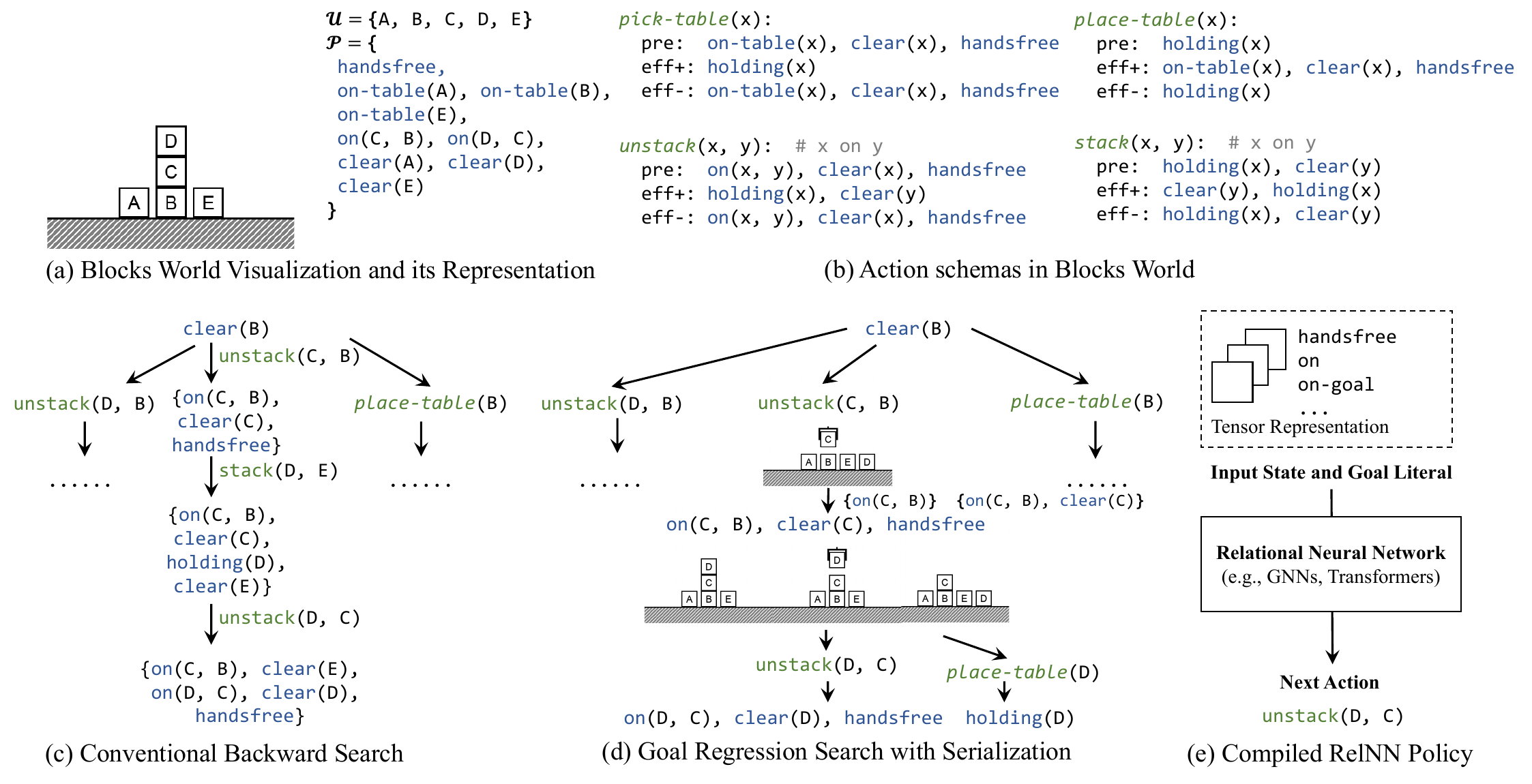}
    \caption{(a) Illustration of the Blocks World domain that we will be using as the main example. (b) The action schema definition in Blocks World. $\textit{clear}(x)$ means there is no object on $x$. (c) A backward search tree for solving the goal \prop{clear}{B}. (d) A serialized goal regression search tree for the same goal. (e) The form of a goal-conditioned policy for this problem.}
    \vspace{-1.5em}
    \label{fig:teaser}
\end{figure}

\vspace{-0.5em}
\section{Preliminaries: Planning Domain and Problem}
\vspace{-0.5em}
\label{sec:background}
Throughout this paper, we focus on analyzing the search complexity and policy circuit complexity of classical planning problems. Importantly, these planning problems have an object-centric representation and sparsity in the transition models: the state of the world is represented as a set of entities and their properties and relationships, while each action only changes a sparse set of properties and relations of a few objects. These features will contribute to search efficiency.

Formally, we consider the problem of planning and policy learning for a space $\gS$ of world states. A planning problem is a tuple $\langle \gS, s_0, \gG, \gA, \gT \rangle$, where $s_0 \in \gS$ is the initial state, $\gG \subseteq \gS$ is a goal specification, $\gA$ is a set of actions that the agent can execute, and $\gT$ is a (possibly partial) environmental transition model $\gT: \gS \times \gA \rightarrow \gS$. The task of planning is to output a sequence of actions $\bar{a}$ in which the terminal state $s_T$ induced by applying $a_i$ sequentially following $\gT$ satisfies $s_T \in \gG$. The task of policy learning is to find a function $\pi(s, \gG)$ that maps from the state and a goal specification to the next action so that applying $\pi$ recurrently on $s_0$ eventually yields a state $s_T \in \gG$.

We use atomic STRIPS~\citep{fikes1971strips,lifschitz1987semantics} to formalize the object-centric factorization of our planning problems. Specifically, as illustrated in \fig{fig:teaser}a, the environmental state can be represented as a tuple $(\gU_s, \gP_s)$, where $\gU_s$ is the set of objects in state $s$, denoted by arbitrary unique names (\eg, \object{A}, \object{B}). The second component, $\gP_s$, is a set of {\it atoms}: $\prop{handfree}{}, \prop{on-table}{A}, \prop{on}{C, B}$, \etc. Each atom contains a predicate name (\eg, \textit{on}) and a list of arguments (\eg, \object{C}, \object{B}). All atoms in $\gP_s$ are true at state $s$, while any atoms not in $\gP_s$ are false. Since we do not consider object creation and deletion, we simply use $\gP_0$ to denote the set of all possible atoms for the object universe $\gU_{s_0}$\footnote{For simplicity, throughout the paper, we will be focusing on Boolean state variables.}. %

Illustrated in~\fig{fig:teaser}b, a (possibly partial) transition model $\gT$ is specified in terms of a set of object-parameterized action schemas $\langle \textit{name}, \textit{args}, \textit{precond}, \textit{effect}\rangle$, where $\textit{name}$ is the name of the action, $\textit{args}$ is a list of symbols, $\textit{precond}$ and $\textit{effect}$ are descriptions of the action's effects. In the basic STRIPS formulation, both $\var{precond}$ and \var{effect} are sets of atoms with variables in \var{args}, and $\var{effect}$ is further decomposed into $\effp$ and $\effm$, denoting ``add'' effects and ``delete'' effects. An action schema (\eg, \func{unstack}) can be grounded into a concrete action $a$ (\eg, \func{unstack}(\object{D}, \object{C})) by binding all variables in \var{args}. The formal semantics of STRIPS actions is:
$\forall s. \forall a. \func{pre}(a) \subseteq s \implies \left(\gT(s, a) = s \cup \effp(a) \setminus \effm(a)\right).$
That is, for any state $s$ and any action $a$, if the preconditions of $a$ are satisfied in $s$, the state resulting from applying $a$ to $s$ will be $s \cup \effp(a) \setminus \effm(a)$. Note that $\gT$ may not be defined for all $(s, a)$ pairs. Furthermore, we will only consider cases where the goal specification is a single atom, although more complicated conjunctive goals or goals that involve existential quantifiers can be easily supported by introducing an additional ``goal-achieve'' action that lists all atoms in the original goal as its precondition. From now on, we will use $g$ to denote the single goal atom of the problem.
\vspace{-0.5em}
\section{Goal Regression Search and Width}
\vspace{-0.5em}
\label{sec:width}

A simple and effective way to solve STRIPS problems is backward search\footnote{Fundamentally, backward search has the same worst-case time complexity as forward search. We choose to analyze backward search because goal regression rules are helpful in determining fine-grained search complexity.}. Illustrated in Algorithm~\ref{alg:backward-original}\footnote{We show the recursive version for clarity. This algorithm can also be implemented as a breadth-first search.}, we start from the goal $\{\var{g}\}$, and search for the last action $a$ that can be applied to achieve the goal (\ie, $\var{g} \in \effp(\var{a})$). Then, we add all the preconditions of action $a$ to our search target and recurse. %
To avoid infinite loops, we also keep track of a ``{\it goal stack}'' variable composed of all $\textit{goal\_set}$'s that have appeared in the depth-first search stack.
The run-time of this algorithm has critical dependencies on (1) the number of steps in the resulting plan and (2) the number of atoms in the intermediate goal sets in search. In general, it has a worst-case time complexity exponential in the number of atoms.

\begin{figure}[tp]
\begin{minipage}{0.49\textwidth}
\begin{algorithm}[H]
\small
\begin{algorithmic}
\Function{bwd}{$s_0$, $\gA$, \var{goal\_set}}
    \If {$\var{goal\_set} \subseteq s_0$} \Return empty\_list() \EndIf
    \State \var{possible\_t} = empty\_set()
    \For{$a \in \{a \in \gA \mid \var{goal\_set} \cap \effm(a) = \emptyset ~\textbf{and}~ \textit{goal\_set} \cap \effp \neq \emptyset\}$}
        \State $\var{new\_goals} = \var{goal\_set} \cup \pre(a) \setminus \effp(a)$
        \If{$\var{new\_goals} \in \var{goal stack}$}
            \textbf{continue}
        \EndIf
        \vspace{2.2em}
        \State $\pi$ = \Call{bwd}{$s_0$, $\var{new\_goals}$}
        \vspace{2.6em}
        \If {$\pi \neq \perp$}
            \State $\var{possible\_t}.\var{add}(\pi + \{a\})$
        \EndIf
    \EndFor
    \State \Return shortest path in \var{possible\_t}
\EndFunction
\end{algorithmic}
\captionof{algorithm}{Plain backward search.}
\label{alg:backward-original}
\end{algorithm}
\end{minipage}
\hfill
\begin{minipage}{0.49\textwidth}
\begin{algorithm}[H]
\small
\begin{algorithmic}
\Function{s-grs}{$s_0$, $\gR$, \var{g}, \var{cons}}
    \If {$\var{g} \in s_0$}
        \Return $(s_0)$
    \EndIf
    \State \var{possible\_t} = empty\_set()
    \For{$r \in \{r \in \gR \mid \var{cons} \cap \effm(\func{action}(r)) = \emptyset$ \textbf{and} $\func{goal}(r)=g\}$}
        \State $p_1, p_2, \cdots, p_k$ = subgoals(\var{r})
        \If{$\exists p_i$ s.t. $p_i \in \var{goal stack}$}
            \textbf{continue}
        \EndIf
        \For{$i$ in $1, 2, \cdots, k$}
            \State $new\_c$ = $\var{cons} \cup \{p_1, \cdots, p_{i-1}\}$
            \State $\pi_i$ = \Call{s-grs}{$s_{i-1}$, $p_i$, $\var{new\_c}$}
            \If {$\pi_i == \perp$} \textbf{break} \EndIf
            \State $s_i = \gT(s_{i-1}, \pi_i[-1])$
        \EndFor
        \If {$\pi_k \neq \perp$}
            \State $\var{possible\_t}.\var{add}(\pi_1 + \ldots + \pi_k + \{a\})$
        \EndIf
    \EndFor
    \State \Return shortest path in \var{possible\_t}
\EndFunction
\end{algorithmic}
\captionof{algorithm}{Serialized goal regression search.}
\label{alg:backward-lpk}
\end{algorithm}
\end{minipage}
\vspace{-1.5em}
\end{figure}

\vspace{-0.5em}
\subsection{Serialized Goal Regression Search}
\vspace{-0.5em}

One possible way to speed up backward search is to {\em serialize} the subgoals in the goal set: that is, to commit to achieving them in a particular order.
In the following sections, we study a variant of the plain backward search algorithm, namely {\it serialized goal regression search} (S-GRS, Algorithm~\ref{alg:backward-lpk}), which uses a sequential decomposition of the preconditions: given the goal atom $\var{g}$, it searches for the last action $a$ and also an order in which the preconditions of $a$ should be accomplished. Of course, this method cannot change the worst-case complexity of the problem (actually, it may slow down the algorithm in cases where we need to search through many orders of the preconditions), but there are useful subclasses of planning problems for which it can achieve a substantial speed-up.

A {\it serialized goal regression rule} in $\gR$, informally, takes the following form: \gregress{g}{p_1, p_2, \cdots, p_k}{a}. It reads: in order to achieve $g$, which is a single atom, we need to achieve atoms $p_1, \cdots, p_k$ sequentially, and then execute action $a$. To formally define the notion of achieving atoms $p_1, \cdots, p_k$ ``{\it sequentially},'' we must include constraints in goal regression rules. Formally, we consider the application of a rule under constraint $c$ to have the following definition,
$\gregressfull{g}{c}{p_1^c, p_2^{c \cup \{p_1\}}, \cdots, p_k^{c \cup\{p_1,\cdots p_{k-1}\}}}{a}$ which reads: in order to achieve $g$ without deleting atoms in set $c$, we can first achieve $p_1$ while maintaining $c$, then achieve $p_2$ while maintaining $c$ and $p_1$, $\dots$, until we have achieved $p_k$. Finally, we perform $a$ to achieve $g$. For example, in Blocks World, we have the rule:
$\gregress{\prop{clear}{B}^\emptyset}{\prop{on}{A, B}, \prop{clear}{A}^{\{\prop{on}{A, B}\}}, \prop{handsfree}{}^{\{\prop{on}{A, B}, \prop{clear}{A}\}}}{\prop{unstack}{A, B}}$. Note that many rules might be infeasible for a given state, because it is simply not possible to achieve $p_i$ while maintaining $c$ and $\{p_1, \cdots, p_{i-1}\}$. In this case, achieving \prop{handsfree}{} before \prop{clear}{A} is infeasible because \prop{handsfree}{} may be falsified while achieving $\prop{clear}{A}$, if we need to move other blocks that are currently on \object{A}. Similarly, $\prop{on}{A, B}$ must be achieved before $\prop{clear}{A}$ because moving $\text{A}$ onto $\text{B}$ will break $\prop{clear}{A}$ (because we need to pick up $\text{A}$).

The set of all possible goal regression rules can be instantiated based on all ground actions $\gA$ in a planning problem. Specifically, for a given atom $g$ (\eg, \prop{holding}{A}) and a set of constraints $c$, and for each action $a \in \gA$ (\eg, $\prop{pick-table}{A}$), if $g \in \effp(a)$ while $c \cap \effm(a) = \emptyset$, then for any permutation of $\pre(a)$, there will be a goal regression rule: \gregressfull{g}{c}{p_1, \cdots, p_k}{a}, where $p_1, \cdots, p_k$ is a permutation of $\pre(a)$. We call this regression rule set $\gR_0$.

Given a set of regression rules (\eg, $\gR_0$), we can apply the serialized goal-regression search (S-GRS) algorithm, shown in Algorithm~\ref{alg:backward-lpk}. S-GRS returns a shortest path from $\var{state}$ to achieve $\var{goal}$ while maintaining $\var{cons}$.
Unfortunately, this algorithm is not optimal nor even complete (\eg, a sequence of preconditions may not be ``serializable,'' which we will define formally later), in the general case. To make it complete, it is necessary to backtrack through different action sequences to achieve each subgoal, which increases time complexity. We include a discussion in Appendix~\ref{sec:supp-real-complete-sgrs}.

\vspace{-0.5em}
\subsection{Serialization of Goal Regression Rules}
\vspace{-0.5em}
Although Algorithm~\ref{alg:backward-lpk} is not complete in general, it provides insights about goal regression. In the following, we will introduce two properties of planning problems such that S-GRS becomes optimal, complete, and efficient. For brevity, we define $\func{OptSearch}(\var{state}, \var{goal}, \var{cons})$ as the set of optimal trajectories that achieve $\var{goal}$, from \var{state}, while maintaining \var{cons}. Here, \var{goal} can be a conjunction.

\begin{definition}[Optimal serializability]
\label{def:opt-serializable}
A goal regression rule \gregress{g^\var{cons}}{p_1,\cdots,p_k}{a} is optimally serializable \wrt a state $s$ if and only if, for all steps $i$, if $\forall\pi_{<i} \in \textit{OptSearch}(\var{s}, p_{1} \wedge \cdots \wedge p_{i-1}, \var{cons})$ and $\forall \pi_i \in \textit{OptSearch}(\gT(s, \pi_{<i}), p_i, \var{cons} \cup \{p_1, \cdots, p_{i-1}\})$ then $\pi_{<i} \oplus \pi_i \in \textit{OptSearch}(\var{s}, p_1 \wedge \cdots \wedge p_{i}, \var{cons})$.
Furthermore, $\forall \pi \in \textit{OptSearch}(\var{s}, p_1 \wedge \cdots \wedge p_k, \var{cons})$, $\pi \oplus a \in \textit{OptSearch}(\var{s}, g, \var{cons})$\footnote{$a \oplus b$ denotes concatenation of sequences.}.
\end{definition}

Intuitively, a rule is optimally serializable if any optimal plan for the length $i-1$ prefix of its preconditions can be extended into an optimal plan for achieving the length $i$ prefix.
For example, in Blocks World, the rule \gregress{\prop{holding}{A}}{\prop{on}{A, B}, \prop{clear}{A}, \prop{handsfree}{}}{\prop{unstack}{A, B}} is optimally serializable when $\prop{on}{A, B}$ is true for $s$. We define the set of optimally serializable rules for a state $s$ as $\var{OSR}(s)$, and the set of single-literal goals that can be solved with $\var{OSR}(\cdot)$ from $s_0$ as $\var{OSG}(s_0)$\footnote{It is possible to make a non-optimally-serializable rule into an optimally serializable one by introducing ``super predicates'' as the conjunction of existing predicates, at the cost of increasing the state and action space sizes. We include discussions in Appendix~\ref{sec:supp-make-serialization}.}.

\begin{theorem}%
For any goal $g \in \var{OSG}(s_0)$, S-GRS is optimal and complete. See proof in Appendix~\ref{sec:supp-sgrs-completeness}.
\end{theorem}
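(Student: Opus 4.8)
The plan is to prove both properties — optimality and completeness of S-GRS — by induction on the length of the optimal plan (equivalently, on the recursion depth of the search), exploiting the optimal-serializability hypothesis to glue together subplans returned by recursive calls.

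First I would set up the induction. Fix $s_0$ and consider a goal $g \in \var{OSG}(s_0)$, meaning $g$ is solvable using only rules in $\var{OSR}(\cdot)$ along the relevant states. The inductive statement I would carry is: for every state $s$ reachable in the search, every single atom $g$ solvable from $s$ via optimally serializable rules while maintaining constraints \var{cons}, \textsc{s-grs}$(s, g, \var{cons})$ returns a trajectory in $\func{OptSearch}(s, g, \var{cons})$ (and in particular does not return $\perp$). The base case is $g \in s_0$ (more generally $g \in s$), where the algorithm returns the trivial trajectory $(s)$, which is vacuously optimal. For the inductive step, take an optimal plan $\pi^\star$ achieving $g$ from $s$; its last action $a$ has $g \in \effp(a)$ and, since $\pi^\star$ maintains \var{cons}, $\var{cons} \cap \effm(a) = \emptyset$, so the corresponding family of rules (one per permutation of $\pre(a)$) is examined by the \textbf{for} loop. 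Because $g \in \var{OSG}(s_0)$, at least one such permutation yields a rule in $\var{OSR}(s)$; I would argue that for that rule each subgoal $p_i$ is itself solvable from the appropriate intermediate state (maintaining the augmented constraints) via optimally serializable rules, hence each recursive call \textsc{s-grs}$(s_{i-1}, p_i, \var{new\_c})$ succeeds and, by the induction hypothesis, returns an element of $\func{OptSearch}(s_{i-1}, p_i, \var{cons} \cup \{p_1,\dots,p_{i-1}\})$. Applying the definition of optimal serializability repeatedly (for $i = 2, \dots, k$) shows $\pi_1 \oplus \cdots \oplus \pi_k \in \func{OptSearch}(s, p_1 \wedge \cdots \wedge p_k, \var{cons})$, and the final clause of Definition~\ref{def:opt-serializable} gives $\pi_1 \oplus \cdots \oplus \pi_k \oplus a \in \func{OptSearch}(s, g, \var{cons})$. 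Since the algorithm takes the shortest path among all candidates in \var{possible\_t}, its output is no longer than this, and any candidate it returns is a valid plan for $g$; hence the output is optimal. This simultaneously establishes completeness (a non-$\perp$ answer is returned) and optimality.

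Two technical points need care. The first is the goal-stack cycle check: I must argue it never prunes a subgoal that lies on an optimal solution path. The intuition is that an optimal plan never needs to re-achieve an atom that is already an active (unachieved) goal on the stack — doing so would create a strictly shorter plan — so the \textbf{continue} triggered by $p_i \in \var{goal stack}$ only discards provably suboptimal branches; formalizing this requires observing that if $p_i$ appears higher on the stack then the segment of $\pi^\star$ between the two occurrences could be excised. The second point is the constraint bookkeeping: I need to check that the intermediate states $s_i = \gT(s_{i-1}, \pi_i[-1])$ used by the algorithm coincide with the states along the optimal plan, which follows because each $\pi_i$ is itself optimal and maintains the accumulated constraints $\{p_1,\dots,p_{i-1}\}$, so earlier-achieved subgoals persist — this is exactly what the constraint sets $\var{new\_c}$ are designed to enforce, and it is what makes the conjunctive achievement well-defined.

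The \textbf{main obstacle} I anticipate is the interaction between the cycle-detection pruning and completeness: proving that restricting to rules in $\var{OSR}(\cdot)$ together with the goal-stack check still leaves a reachable solution requires a careful argument that some optimal plan can be decomposed "greedily" into optimally serializable pieces without ever revisiting a stacked subgoal. This is essentially a structural claim about the shape of optimal plans under the $\var{OSG}$ assumption, and getting the quantifiers right — "there exists a decomposition" versus "for all decompositions" in Definition~\ref{def:opt-serializable} — is where the proof will demand the most attention; everything else is a fairly mechanical induction. I would therefore isolate that structural claim as a preliminary lemma before running the main induction.
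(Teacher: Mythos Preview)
Your proposal is correct and follows essentially the same inductive skeleton as the paper: both argue optimality by induction, use the optimal-serializability definition to glue recursive subplans into an optimal plan for the conjunction, and then append the final action $a$. The paper inducts on $\func{Distance}(s_0,g,\var{cons})$ (the length of the trajectory S-GRS itself returns) rather than on the length of an externally-given optimal plan, but this is a cosmetic difference.

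The one place where the paper diverges from you is completeness. The paper dispatches it in a single sentence: ``the definition of $\var{OSG}(s_0)$ asserts that $g$ can be solved by S-GRS.'' Recall that $\var{OSG}(s_0)$ is \emph{defined} as the set of single-literal goals solvable from $s_0$ using only rules in $\var{OSR}(\cdot)$; so completeness is essentially baked into the hypothesis and there is nothing to prove. This definitional move sidesteps exactly the obstacle you flagged as the hard part---the interaction between the goal-stack cycle check and the existence of a serializable decomposition---because membership in $\var{OSG}(s_0)$ already presupposes that S-GRS (with its goal-stack pruning) terminates successfully on $g$. Your plan to isolate a structural lemma about optimal plans never revisiting stacked subgoals is sound and would give a more self-contained argument, but under the paper's definition of $\var{OSG}$ it is not needed. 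The paper also does not explicitly discuss the constraint-bookkeeping point you raise; your treatment there is more careful than the paper's.
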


\vspace{-0.5em}
\subsection{Width of Search Problems}
\vspace{-0.5em}
\citet{chen2007act} introduced the notion of the {\em width} of a planning problem and showed that the forward-search complexity is exponential in the width of a problem.  We extend this notion to regression-based searches and introduce a generalized version of regression rules in which not all previously achieved preconditions must be explicitly maintained.

\begin{definition}[Generalized regression rules]
A generalized, optimally-serializable regression rule is \gregressfull{g}{c}{p_1^{c \cup c_1}, p_2^{c \cup c_2}, \cdots, p_k^{c \cup c_k}}{a}, where $c_i \subseteq \{p_1, p_2, \cdots, p_{i-1}\}$. Its width is $|c| + \max\{ |c_1|, \cdots, |c_k| \}$.
\end{definition}

For example, for any initial state $s_0$ that satisfies $\prop{on}{A, B}$, the following generalized rule is optimally serializable: $\gregress{\prop{clear}{B}^\emptyset}{\prop{on}{A, B}^\emptyset, \prop{clear}{A}^\emptyset, \prop{handsfree}{}^{\{\prop{clear}{A}\}}}{\prop{unstack}{A, B}}$. When we plan for the second precondition \prop{clear}{A}, we can ignore the first condition $\prop{on}{A, B}$, because the optimal plan for $\prop{clear}{A}$ will not change $\prop{on}{A, B}$. Using such generalized rules to replace the default rule set $\gR_0$ improves search efficiency by reducing the number of possible subgoals. To define existence conditions for highly efficient search algorithms and policies, we need a stronger notion:

\begin{definition}[Strong optimally-serializable (SOS) width of regression rules]
A generalized regression rule has strong optimally-serializable width $k$ \wrt state $s$ if (a) it is optimally serializable, (b) its width is $k$, and, (c) $\forall c \func{OptSearch}(s, p_i \cup c_i \cup c, \emptyset) \subseteq \func{OptSearch}(s, \{p_1, \cdots, p_i\} \cup c, \emptyset)$.
\end{definition}

In the Blocks World domain, given a particular state $s$, if $\prop{on}{A,B} \in s$, the generalized goal regression rule $\gregress{\prop{clear}{B}}{\prop{on}{A, B}, \prop{clear}{A}, \prop{handsfree}{}^{\{\prop{clear}{A}\}}}{\prop{unstack}{A, B}}$ is strong optimally-serializable, because the optimal trajectory to achieve $\prop{clear}{A}$ will not move \object{A}.

\begin{definition}[SOS width of problems]
A planning problem $P = \langle \gS, s_0, g, \gA, \gT \rangle$ has strong optimally-serializable width $k$ if there exists a set of strong optimally-serializable width $k$ rules $\gR$ \wrt $s_0$, such that S-GRS can solve $P$ using only rules in $\gR$.
\end{definition}

\begin{theorem}
If a problem $P$ is of SOS width $k$, it can be solved in time $O(N^{k+1})$ with \textsc{s-grs}, where $N$ is the number of atoms. Here we have omitted polynomials related to enumerating all possible actions and their serializations (which are polynomial \wrt the number of objects in $\gU$).
\end{theorem}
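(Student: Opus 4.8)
The plan is to view \textsc{s-grs}, restricted to a witnessing rule set $\gR$ (which exists by the definition of SOS width $k$), as a dynamic program over the subproblems it can encounter, to bound the number of such subproblems by $O(N^{k+1})$, and to charge only $\mathrm{poly}(|\gU|)$ work to each. The first step is to show that every recursive call $\textsc{s-grs}(\cdot, g', \var{cons}')$ that can arise while searching with $\gR$ has $g'$ a single atom and $|\var{cons}'| \le k$. This is an induction on recursion depth: the top-level call uses $\var{cons} = \emptyset$; whenever a generalized rule of $\gR$ with goal atom $g'$, outer constraint $c$, and serialized subgoals $p_1^{\,c \cup c_1}, \cdots, p_m^{\,c \cup c_m}$ (so $c_i \subseteq \{p_1, \ldots, p_{i-1}\}$) is applied at a node whose constraint set equals $c = \var{cons}'$, its width $|c| + \max_i |c_i| \le k$ forces $|c| \le k$, and each child call is issued with constraint $c \cup c_i$, of size at most $|c| + |c_i| \le k$. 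Hence the distinct subproblems all lie in the set of pairs (single goal atom, subset of $\gP_0$ of size at most $k$), whose cardinality is at most $N \cdot \sum_{j=0}^{k}\binom{N}{j} = O(N^{k+1})$.

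The second step is to argue that a subproblem is genuinely characterized by $(g', \var{cons}')$, so that \textsc{s-grs} run with a transposition table keyed on $(g', \var{cons}')$ expands each of these $O(N^{k+1})$ subproblems at most once while still returning the correct answer. Soundness, optimality, and completeness of the returned trajectory are inherited from the earlier completeness theorem, since every rule in $\gR$ is optimally serializable \wrt $s_0$. For the collapse to $O(N^{k+1})$ nodes, I would use the strong condition (c): optimal serializability lets one concatenate an optimal plan for the prefix $p_1 \wedge \cdots \wedge p_{i-1}$ with an optimal plan for $p_i$ to obtain an optimal plan for $p_1 \wedge \cdots \wedge p_i$, while (c) states that $\func{OptSearch}(s_0, p_i \cup c_i \cup c, \emptyset) \subseteq \func{OptSearch}(s_0, \{p_1,\ldots,p_i\} \cup c, \emptyset)$; together these imply that the sub-plan \textsc{s-grs} computes when it recurses on $p_i$ from the intermediate state $s_{i-1}$ under constraint $c \cup c_i$ is a suffix of an optimal plan from $s_0$ that depends only on $(p_i, c \cup c_i)$, not on which rule orderings produced $s_{i-1}$.

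The third step bounds the work at a single subproblem $(g', \var{cons}')$. Here \textsc{s-grs} enumerates the rules of $\gR$ whose goal is $g'$ and whose action $a$ satisfies $\var{cons}' \cap \effm(a) = \emptyset$; each such rule is a ground action $a$ with $g' \in \effp(a)$ paired with a serialization of $\pre(a)$, and there are only polynomially many such pairs in the number of objects $|\gU|$. For each rule we make $m \le k$ memoized recursive calls and do polynomial bookkeeping. Multiplying the $O(N^{k+1})$ bound on the number of subproblems by this per-subproblem cost, and suppressing the $\mathrm{poly}(|\gU|)$ factors as the statement permits, yields total running time $O(N^{k+1})$.

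I expect the second step to be the main obstacle: making precise the sense in which the search ``forgets'' the intermediate world state, so that the space of subproblems collapses from an a priori exponential tree to the $O(N^{k+1})$-node lattice indexed by (atom, constraint set of size at most $k$). It is exactly condition (c) of the SOS-width definition --- the part beyond plain optimal serializability --- that prevents the same pair $(g', \var{cons}')$ from recurring with genuinely different optimal sub-plans from different reachable states; verifying that the transposition-table variant of Algorithm~\ref{alg:backward-lpk} remains complete and optimal under this caching is the delicate point, and I would handle it by an induction mirroring the completeness proof of the earlier theorem.
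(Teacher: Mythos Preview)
Your proposal is essentially the paper's argument: bound the subproblem space by the $O(N^{k+1})$ pairs $(\var{goal},\var{cons})$ with $|\var{cons}|\le k$, then use the SOS condition to justify memoizing on that pair alone (collapsing away the intermediate-state argument), and finally charge polynomial work per subproblem. Your ``second step'' is exactly the delicate point the paper flags, and your identification of condition~(c) as what makes the cache sound matches the paper's reasoning.

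One ingredient you leave implicit that the paper states explicitly: you run the whole argument \emph{restricted to a witnessing rule set $\gR$}, but Algorithm~\ref{alg:backward-lpk} as written is not handed $\gR$. The paper closes this gap by noting that one can simply enumerate \emph{all} width-$k$ generalized rules (there are $O(2^k)$ choices of $c_i\subseteq\{p_1,\dots,p_{i-1}\}$ per serialization, a constant when $k$ is fixed, times the polynomially many ground actions and precondition orderings); the existence of the SOS rules among them guarantees the memoized search still succeeds, so the algorithm need not know $\gR$ a priori. You should add this short step. A minor slip: in your third step ``$m\le k$'' conflates the number of preconditions with the SOS width; $m$ is bounded by the operator's precondition count, which is absorbed into the omitted $\mathrm{poly}(|\gU|)$ factor, not by $k$.
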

{\it Proof idea.} We start by assuming knowing $\gR$: the proof can be simply done by analyzing function calls: there are at most $O(N^{k+1})$ possible argument combinations of $(\var{goal}, \var{cons})$. Next, since the enumeration of all possible width-$k$ rules can be done in polynomial time. Therefore, the algorithm runs in polynomial time and it does not need to know $\gR$ a priori. Note that the SOS condition cannot be removed because even if regression rules have a small width, there will be a possibly exponential branching factor caused by ``free variables'' in regression rules that do not appear in the goal atom. See the full proof in Appendix~\ref{sec:supp-sgrs-complexity}. We also describe the connections with classical planning width and other related concepts, as well as how to generalize to $\forall$-quantified preconditions in Appendix~\ref{sec:supp-sos-discussion}.

Importantly, planning problems such as Blocks World and Logistics have constant SOS widths, independent of the number of objects. Therefore, there exist polynomial-time algorithms for finding their solutions. Our analysis is inspired by and closely related to classical (forward) planning width~\citet{chen2007act} and \citet{lipovetzky2012width}. Indeed, we have:
\begin{theorem}
Any planning problem that has SOS-width $k$ has a forward width of at most $k+1$ and, hence, can be solved by the algorithm $\func{IW}(k+1)$. See the full proof in Appendix~\ref{sec:supp-connection-iw}.
\label{thm:iw-sufficiency}
\end{theorem}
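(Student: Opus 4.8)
The plan is to reduce the statement to a single construction: assuming $P$ has SOS-width $k$, exhibit a chain of atom tuples of the kind used to define forward width whose every tuple has size at most $k+1$. Recall that a problem has forward width at most $w$ if either its goal holds in $s_0$ or there is a sequence $t_0,t_1,\dots,t_n$ of sets of atoms with $|t_j|\le w$ for all $j$, $t_0$ true in $s_0$, every optimal plan for $t_j$ extendable by appending one action into an optimal plan for $t_{j+1}$, and every optimal plan for $t_n$ an optimal plan for the goal $g$; and that $\func{IW}(w)$ finds an optimal plan for every such problem in time polynomial in the number of atoms when $w$ is fixed. So the whole content of the theorem is producing such a chain with $w=k+1$.

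I would start from a set $\gR$ of strong optimally-serializable width-$k$ rules (w.r.t.\ $s_0$) that lets \textsc{s-grs} solve $P$, run \textsc{s-grs}, and record the tree of successful recursive calls together with the optimal plan $\pi^\star$ it returns and the state sequence $s_0=\sigma_0,\sigma_1,\dots,\sigma_L$ traversed by $\pi^\star$. The idea is to ``flatten'' this recursion onto $\sigma_0,\dots,\sigma_L$: at the moment \textsc{s-grs} is working on the $i$-th subgoal $p_i$ of a rule $\gregressfull{h}{C}{p_1^{C\cup c_1},\dots,p_m^{C\cup c_m}}{a}$ applied under accumulated constraint set $C$, assign the tuple $\{p_i\}\cup c_i\cup C$; since the rule has width $|C|+\max_j|c_j|\le k$, this tuple has size at most $1+|c_i|+|C|\le k+1$, the ``$+1$'' being the single fresh subgoal atom bundled with the minimal context $c_i$ needed to not disturb $p_1,\dots,p_{i-1}$. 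I would then prove, by induction on recursion depth (equivalently on the length of the returned sub-trajectory), a lemma of the form: every successful call $\textsc{s-grs}(s',h,C)$ with $C\subseteq s'$ and returned trajectory $s'=\sigma'_0,\dots,\sigma'_\ell$ carries a chain $t_0,\dots,t_\ell$ with $C\subseteq t_j$, $|t_j|\le k+1$, $t_0=C$, the one-action extension property holding relative to optimal plans from $s'$, and $\func{OptSearch}(s',t_\ell,\emptyset)\subseteq\func{OptSearch}(s',\{h\}\cup C,\emptyset)$. The base case is $h\in s'$ (empty trajectory, chain $(C)$). In the inductive step the call applies a rule as above; by the induction hypothesis each sub-call for $p_i$ supplies a chain, and I would concatenate these sub-chains, using (i) optimal serializability of the rule to show that gluing optimal sub-plans yields optimal plans for each conjunctive prefix $\{p_1,\dots,p_i\}\cup C$ and that appending $a$ to an optimal plan for $\{p_1,\dots,p_m\}\cup C$ is an optimal plan for $h$, and (ii) the SOS condition (c), instantiated with external context $c:=C$, to collapse the endpoint of the $i$-th sub-chain from the (possibly large) prefix tuple $\{p_1,\dots,p_i\}\cup C$ back down to the small tuple $\{p_i\}\cup c_i\cup C$, so the $k+1$ budget is preserved while the next sub-chain is still licensed to begin. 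Instantiating the lemma at the top-level call $\textsc{s-grs}(s_0,g,\emptyset)$ (so $C=\emptyset$) yields a forward-width-$(k+1)$ chain for $P$, and the IW consequence follows from the cited property of $\func{IW}$.

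The step I expect to be the main obstacle is making the concatenation at the seams rigorous. A sub-chain delivered by the induction hypothesis is admissible only relative to optimal plans \emph{from the intermediate state} $\sigma$ where its sub-call begins, whereas the chain for the enclosing call --- and ultimately for $P$ --- must be admissible relative to optimal plans \emph{from $s_0$}; bridging this requires that an optimal plan from $\sigma$ for a sub-chain tuple, prefixed by the optimal plan already built to reach $\sigma$, is again an optimal plan from $s_0$ for the accumulated tuple, which is exactly what optimal serializability provides but must be invoked level by level, and one must check that the constraint sets $C\cup c_i$ threaded through the recursion really match the ``external context'' slot of condition (c) and that every state at which a deeper rule is applied (all lying on optimal trajectories issuing from $s_0$) is one for which that rule retains the SOS-width-$k$ property. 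A related subtlety is that \textsc{s-grs} with constraint set $C$ asks for plans that \emph{maintain} $C$ throughout, while forward-width chains only speak of \emph{reaching} states satisfying a tuple; one must argue these coincide on the relevant tuples for SOS-width-$k$ problems (the optimal plans in question never transit a state violating $C$), which again follows from optimal serializability together with condition (c). The remaining work --- empty goal sets and rules whose preconditions already hold, the reduction of conjunctive or quantified goals to a single goal atom via the ``goal-achieve'' action of Section~\ref{sec:background}, and verifying the invariant $C\subseteq s'$ so that $t_0=C$ is legitimately true in $s'$ --- is clerical, and once the seam lemma is in place the size bound $k+1$ and all chain conditions follow mechanically.
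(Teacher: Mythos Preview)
Your approach is correct and, at its core, constructs the same witness as the paper: both assign to each action in the \textsc{s-grs} plan the tuple $\{g\}\cup\var{cons}$, where $g$ is the subgoal that action achieves and $\var{cons}$ is the constraint set of the recursive call in which it was executed (this is your $\{p_i\}\cup c_i\cup C$ read at the leaf of the recursion). The size bound $\le k+1$ is the same in both.

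Where the paper differs is in \emph{how} it certifies the chain. Instead of your direct inductive construction with an explicit seam lemma, the paper argues by a single contradiction: if some action $a$ in the returned (optimal) plan achieves a tuple $\{g\}\cup\var{cons}$ that was already true at an earlier state along the plan, then the SOS property lets one splice in the shorter prefix and continue optimally to the goal, contradicting optimality of the \textsc{s-grs} plan. Hence every action achieves a \emph{novel} $(k{+}1)$-tuple, which is exactly what is needed for the goal to lie in the $(k{+}1)$-tuple graph that $\func{IW}(k+1)$ explores. All of the seam issues you flag---lifting optimality from intermediate states back to $s_0$, matching constraint sets to the ``external context'' slot of condition~(c), ensuring maintained constraints coincide with reached tuples---are absorbed into that one contradiction step: if any of them failed, the plan could be shortened.

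What each buys: the paper's argument is two sentences and avoids the bookkeeping of threading chains through recursion levels; your construction is more explicit about \emph{which} chain witnesses the width bound and isolates precisely where SOS condition~(c) is invoked (to collapse $\{p_1,\dots,p_i\}\cup C$ back to $\{p_i\}\cup c_i\cup C$), which makes the role of each hypothesis more transparent. Your worry about the seams is legitimate but over-cautious---the contradiction formulation shows that optimal serializability plus condition~(c) applied once, rather than level by level, already suffices.
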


\xhdr{Remark.} Unfortunately, SOS width and forward width are not exactly equivalent to each other. There exist problems whose forward width is smaller than $k+1$, where $k$ is the SOS width. Appendix~\ref{sec:supp-connection-iw} presents a concrete example. However, there are two advantages of our SOS width analysis over the forward width analysis. In particular, proofs for (forward) widths of planning problems were mostly done by analyzing solution structures. By contrast, our constructions form a new perspective: {\it a problem has a large width if the number of constraints to track during a goal regression search is large}. This view is helpful because our analyses can now focus on concrete regression rules for each individual operator in the domain --- how its preconditions can be serialized. Second and more importantly, our ``backward'' view allows us to reason about circuit complexities of policies.

\vspace{-0.5em}
\section{Policy Realization}
\vspace{-0.5em}
\label{sec:policy}

We have studied sequential backward search algorithms for planning; now, we address the questions of how best to ``compile'' or ``learn'' them as feed-forward circuits representing goal-conditioned policies, and how the size of these circuits depends on properties of the problem.
Here, we will focus on understanding the complexity of a certain {\it problem class}; that is, a set of problems in the same domain (\eg, Blocks World) with a similar goal predicate (\eg, having one particular block clear). In particular, we will make use of {\em relational neural networks} (\relnn, such as graph neural networks~\citep{morris2019weisfeiler} and Transformers~\citep{vaswani2017attention}). They accept inputs of arbitrary size, which is critical for building solutions to problems with arbitrary-sized universes. Another critical fact is that these networks have ``parameter-tying'' in the sense that there is a constant-sized set of parameters that is re-used according to a given pattern to realize a network.

In particular, as illustrated in \fig{fig:relnn1}, we will show how to take the set of operator descriptions for a planning domain and construct a relational neural network that represents a goal-conditioned policy: it takes a state and a goal (encoded as a special predicate, \eg, {\it on-goal}) and outputs a ground action. Here, the state is represented as a graph (objects are nodes, and relations are edges; possibly there will be hyperedges for high-arity relations). For example, the input contains nullary features (a single vector for the entire environment), unary features (a vector for each object), binary features (a vector for each pair of objects), \etc. The goal predicate can be represented similarly. These inputs are usually represented as tensors. The complexity of this neural network will depend on the SOS width of problems in the problem class of interest.

We will develop two compilation strategies. First, in Section~\ref{sec:compilation1}, we directly compile the BWD and the S-GRS algorithms into \relnn{} circuits. For problems with constant SOS width that is independent of the size of the universe, there will exist finite-breadth circuits (but the depth may depend on the size of the universe). In Section~\ref{sec:compilation2}, we discuss a new property of planning problems---the existence of {\it regression rule selector} circuits---which results in policy circuits that are smaller than those generated by previous strategies, and they are potentially of finite depth. The construction of the regression rule selector is not automatic and will generally require human priors or machine learning.

\vspace{-0.5em}
\subsection{Relational Neural Network Background}
\vspace{-0.5em}

\begin{wrapfigure}{r}{0.5\textwidth}
\centering
\vspace{-1em}
\includegraphics[width=\linewidth]{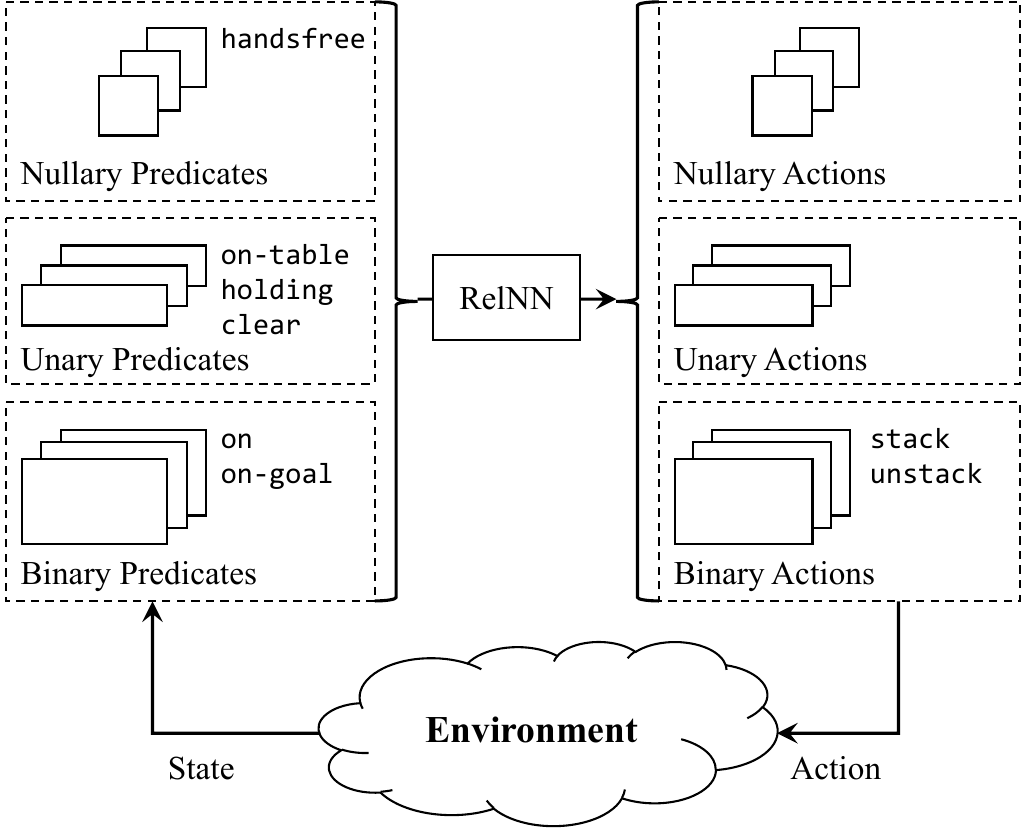}
\caption{The input and output of a relational neural network (\relnn) policy.}
\label{fig:relnn1}
\vspace{-1em}
\end{wrapfigure}

We first quantify what a relational neural network can compute, using the formalization developed by \citet{luo2022expressiveness}; see also \citet{morris2019weisfeiler,barcelo2020logical}. Let depth $D$ be the number of layers in a RelNN, and breadth $B$ be the maximum arity of the relations (hyperedges) in the network. For example, to represent a vector embedding for each tuple of size 5, $f(x_1, x_2, \cdots, x_5)$, where $x_1, \cdots, x_5 \in \gU$ are entities in the planning problem, we need a relational neural network with breadth 5. We will only consider networks with a constant breadth. We denote the family of relational neural networks with depth $D$ and breadth $B$ as $\relnn[D, B]$. We will not be modeling the actual ``hidden dimension'' of the neural network layers (\ie, the number of neurons inside each layer), but we will assume that it is bounded (for example, as a function of $B$ and the number of predicates in the domain). Under such assumptions (most practical graph neural networks, Transformers, and hypergraph neural networks do follow these assumptions), we have the following lemma.

\begin{lemma}[Logical expressiveness of relational neural networks~\citep{luo2022expressiveness, cai1992optimal}]
Let FOC$_B$ denote a fragment of first-order logic with at most $B$ variables, extended with counting quantifiers of the form $\exists^{\geq_n}\phi$, which state that there are at least $n$ nodes satisfying formula $\phi$.
\begin{itemize}[noitemsep,topsep=0pt,parsep=0pt,partopsep=0pt,leftmargin=2em]
\item (Upper Bound) Any function $f$ in FOC$_B$ can be realized by $\relnn[D, B]$ for some $D$.
\item (Lower Bound) There exists a function $f$ in FOC$_B$ such that for all $D$, $f$ cannot be realized by $\relnn[D, B-1]$.
\end{itemize}
\label{thm:fol-bound}
\end{lemma}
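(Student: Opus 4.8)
\emph{Proof idea.} This lemma repackages the known correspondence between relational neural networks and bounded-variable counting logic, so the plan is to assemble it from two pieces: a simulation argument for the upper bound, and the Cai--F\"urer--Immerman separation for the lower bound.

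For the \textbf{upper bound}, I would argue by structural induction on the formula $f \in \text{FOC}_B$. The inductive claim is that every subformula $\psi$ whose free variables lie in $\{x_1,\dots,x_B\}$ is computed, as a feature over tuples of objects, by some $\relnn[D_\psi, B]$, with the free coordinates of $\psi$ read off at the end. Atomic predicates are supplied directly as inputs; a relation of arity $m < B$ is padded up to arity $B$ by broadcasting over the unused coordinates. Boolean connectives are handled by one extra layer applying a fixed MLP pointwise to each tuple's feature vector, which is within what a \relnn layer does. The only substantive case is a counting quantifier $\exists^{\geq_n} y.\,\psi(y,\bar x)$: append a layer that, for each tuple $\bar x$, sums the indicator of $\psi$ over all bindings of the bound variable $y$ --- this is exactly the permutation-invariant aggregation a \relnn layer provides --- followed by a pointwise ReLU MLP that thresholds the resulting count at $n$. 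Since $\text{FOC}_B$ reuses its $B$ variables, no intermediate feature ever needs arity above $B$, so breadth $B$ suffices; the depth is the total depth accumulated over the parse tree of $f$, hence finite. This is essentially the construction of \citet{luo2022expressiveness} (see also \citet{barcelo2020logical}); I would only need to make the padding and thresholding explicit.

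For the \textbf{lower bound}, the route is through graph-isomorphism tests. For any depth $D$, a $\relnn[D, b]$ computes only functions that are invariant under $\text{FOC}_b$-equivalence of inputs: each layer reads its inputs solely through multiset aggregates of $b$-tuple features, so two inputs that no such logic can separate receive identical outputs. Now invoke \citet{cai1992optimal}: at index $B-1$ there is a pair of non-isomorphic graphs that are $\text{FOC}_{B-1}$-equivalent but are separated by some sentence $\varphi \in \text{FOC}_B$. Taking $f = \varphi$ (broadcast to a feature of whatever arity is required), we have $f \in \text{FOC}_B$, yet the distinct Boolean values $f$ must output on those two graphs cannot both be produced by any $\relnn[D, B-1]$.

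The \textbf{main obstacle} is the bookkeeping that makes the three roles of ``$B$'' --- the breadth of the network, the arity of the features a single layer may jointly read, and the number of logical variables --- line up under one fixed convention, so that the CFI separation is invoked at exactly the index for which the expressive ceiling of $\relnn[D, B-1]$ is $\text{FOC}_{B-1}$. The cleanest way around this is to adopt the characterization of \citet{luo2022expressiveness} as the operational definition of $\relnn[D,B]$, which turns the upper bound into a translation and the lower bound into a black-box citation; what then remains is only to check that the counting gadget and the low-arity padding stay within the breadth budget.
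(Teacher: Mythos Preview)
Your proposal is correct and matches the paper's approach. The paper treats this lemma as a cited result from \citet{luo2022expressiveness} and \citet{cai1992optimal} and provides only a one-sentence sketch of the upper bound (``breadth $B$ is analogous to the number of variables \ldots\ depth $D$ is the number of nested quantifiers''), with no discussion of the lower bound beyond the citation; your structural-induction construction for the upper bound and CFI-based separation for the lower bound are exactly the standard arguments those citations point to, so you are simply filling in details the paper deliberately omits.
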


Here, we sketch a constructive proof for using RelNNs to realize FOL formulas. The breadth $B$ is analogous to the number of variables in FOL for encoding the value of the expression; the depth $D$ is the number of ``nested quantifiers.'' For example, the formula $\exists x. \forall y. \forall z. p(x, y, z)$ needs 3 layers, one for each quantifier. Furthermore,  we call a \relnn~finite breadth (depth) if $B$ ($D$) is independent of the number of objects. Otherwise, we call it unbounded breadth (depth).

\vspace{-0.5em}
\subsection{Compilation of BWD and S-GRS}
\vspace{-0.5em}
\label{sec:compilation1}
Lemma~\ref{thm:fol-bound} states an ``equivalence'' between the expressive power of relational neural networks and first-order logic formulas. In the following, we take advantage of this equivalence to compile search algorithms into relational neural networks. First, we have the following theorem.

\begin{theorem}[Compilation of BWD]
Given a planning problem $P$, let T be the length of the optimal trajectory (the planning horizon), $k_{\text{BWD}}$ be the maximum number of atoms in the goal set in $\textsc{bwd}$, and $\beta$ be the maximum arity of atoms in the domain. The backward search algorithm \textsc{bwd} can be compiled into a relational neural network in $\relnn[O(T), \beta \cdot k_{\text{BWD}}]$.
\label{thm:comp-bwd}
\end{theorem}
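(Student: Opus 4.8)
\textit{Proof idea.} I would compile the breadth-first version of \textsc{bwd} (Algorithm~\ref{alg:backward-original}) one search layer at a time, using Lemma~\ref{thm:fol-bound} to turn a single layer of goal-regression expansion into a constant number of \relnn{} layers. Every goal set that \textsc{bwd} ever builds has at most $k_{\text{BWD}}$ atoms, each of arity at most $\beta$, so it is pinned down by the tuple of (at most) $\beta k_{\text{BWD}}$ objects occurring in it, together with a bounded-size tag recording which predicate sits on which argument slot; the tag lives in the (bounded) hidden width. Hence a goal set is a hyperedge of arity $B := \beta k_{\text{BWD}}$, and I would carry, as a relational feature on such hyperedges, the predicate $\textit{Reg}_t(G)$ meaning ``$G$ is obtained from the singleton $\{g\}$ by a chain of at most $t$ regression steps of \textsc{bwd}.'' Parameter-tying makes the resulting circuit uniform over object universes.

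\textit{Expressing one layer as first-order logic.} The base case $\textit{Reg}_0(G)$ just asserts that $G$ encodes exactly the atom on which the goal predicate holds. For the step: $\textit{Reg}_t(G)$ holds iff $\textit{Reg}_{t-1}(G)$ holds, or there is an action schema $A$ and an argument tuple $\bar o$ such that $a = A(\bar o)$ is reverse-applicable to some predecessor goal set $G'$ with $\textit{Reg}_{t-1}(G')$ (so $a$ deletes no atom of $G'$ and adds some atom of $G'$) and $G$ is the $\var{new\_goals}$ that Algorithm~\ref{alg:backward-original} computes from $G'$ and $a$. The key observation is that $G'$ can differ from $G$ only in which atoms of $\pre(a)$ and $\effp(a)$ it contains --- a constant number of Boolean choices over a constant-size pool of atoms (those on the objects currently in scope) --- so this is a disjunction of size constant in the number of objects, each disjunct referring to $\textit{Reg}_{t-1}$ and to the input state $s_0$ only at in-scope variables. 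Thus one layer is an FOC formula over $B$ object variables (plus, in general, a constant for the arguments $\bar o$; see the obstacle below) with constant quantifier-nesting depth $d_0$ that depends only on the domain ($\beta$, $k_{\text{BWD}}$, the number of predicates and schemas). By Lemma~\ref{thm:fol-bound} it is realized by a \relnn{} of depth $d_0$ and breadth $B$, and these blocks compose by stacking, since each outputs exactly the relation the next one reads.

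\textit{Assembling the policy.} Stack the base-case block, then $T$ copies of the step block, then a constant-depth readout head that emits the action to execute now --- namely $\bar a[0]$ of the plan \textsc{bwd} returns, which is the action chosen at the bottom of its recursion. That action is the pair $a = A(\bar o)$ for which some goal set $G'$ satisfies $\textit{Reg}_{T-1}(G')$, $a$ is reverse-applicable to $G'$, and the $\var{new\_goals}$ of $G'$ under $a$ is contained in $s_0$ (which also forces $\pre(a) \subseteq s_0$, so $a$ is executable from $s_0$). Because $T$ is, by hypothesis, the optimal horizon, every chain from $\{g\}$ to a subset of $s_0$ has length exactly $T$, so this head selects a valid first action of an optimal plan --- exactly what \textsc{bwd} does, its ``shortest path in $\var{possible\_t}$'' being this minimality and its goal-stack check only pruning non-minimal chains and hence leaving $\textit{Reg}_t$ unchanged. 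The total depth is $O(T) \cdot d_0 + O(1) = O(T)$ and the breadth is $\beta k_{\text{BWD}}$.

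\textit{Main obstacle.} The delicate part is the breadth bound: the step formula must keep in scope, simultaneously, the objects of $G$ (at most $\beta k_{\text{BWD}}$), those of a candidate predecessor $G'$ (again a goal set, so at most $\beta k_{\text{BWD}}$ objects, and all of them among the objects of $G$ and the arguments $\bar o$), and $\bar o$ itself, and it must re-encode $G$ as a hyperedge (an assignment of in-scope objects to the $B$ slots). Under the mild well-formedness assumption that every schema argument occurs in a precondition atom, $\bar o$ lies among the objects of $G$ (since $\pre(a) \subseteq G$), so no extra variables are needed and the breadth is exactly $\beta k_{\text{BWD}}$; in general one carries an additive constant equal to the maximum schema arity, which the full proof must either fold into the stated bound or track, along with the precise value of $d_0$. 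Everything else is routine composition of Lemma~\ref{thm:fol-bound}.
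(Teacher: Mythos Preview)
Your proposal is correct and follows essentially the same construction as the paper: encode each goal set reached by \textsc{bwd} as a hyperedge over the at most $\beta k_{\text{BWD}}$ objects it mentions, carry the ``which conjunctive pattern'' information in the hidden width, and propagate one regression layer per \relnn{} block for $O(T)$ total depth. The paper's detailed construction is the same $q_i^{(d)}(x_1,\ldots,x_{k'}) \Rightarrow q_j^{(d+1)}(x_1,\ldots,x_{k'})$ mechanism you describe, with actions grounded on the in-scope variables; your ``main obstacle'' about whether the action arguments $\bar o$ fit within the $\beta k_{\text{BWD}}$ slots is a genuine subtlety that the paper handles only implicitly (by restricting to actions grounded on $\{x_1,\ldots,x_{k'}\}$), so your explicit well-formedness assumption is, if anything, a more careful treatment of the same point.
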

{\it Proof sketch.} We provide a construction in which the \relnn
computes a set of subgoals (\ie, a set of sets of ground atoms) $\var{Goal}^d$ at each layer $d$. Initially, $\var{Goal}^0 = \{\{g\}\}$. Then, $\var{sg} \in \var{Goal}^d$ if there is a path of length $d$ from any state $s$ that satisfies $\var{sg} \subseteq s$ to $g$. See Appendix~\ref{sec:supp-bwd-compilation} for the full proof.

\newcommand{\kbwd}{k_{\text{BWD}}}
\newcommand{\expr}{\func{expr}}
%

%

Although this construction is general and powerful, it is unrealistic for large problems because the
depth
can be exponential in the number of objects, and the number of atoms in the subgoal conjunctive formulas can be exponential in the depth. Therefore, in the following, we will leverage the idea of serialized goal regression to make a more efficient construction.

\begin{theorem}[Compilation of S-GRS]
Given a planning problem $P$ of SOS width $k$, let T be the length of the optimal trajectory, and $\beta$ be the maximum arity of atoms in the domain. The serialized goal regression search S-GRS can be compiled into a $\relnn[O(T), (k+1) \cdot \beta]$, where $k+1 \le k_{\text{BWD}}$. See Appendix~\ref{sec:supp-grs-compilation} for the full proof using a similar technique as in Theorem~\ref{thm:comp-bwd}.
\label{thm:comp-sgrs}
\end{theorem}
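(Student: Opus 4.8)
\textit{Proof plan.} The plan is to mirror the construction behind Theorem~\ref{thm:comp-bwd}, but to replace the ``subgoal conjunctions'' that the \textsc{bwd} circuit carries with the much smaller \emph{regression configurations} that S-GRS manipulates. A configuration is a pair $(g', c')$ consisting of a single target atom $g'$ and a constraint set $c'$; since $P$ has SOS width $k$, there is a rule set $\gR$, all of whose rules have width $\le k$, with which S-GRS solves $P$, and every configuration $(g',c')$ that arises in such a search has $|c'|\le k$. Hence a configuration is described by at most $k+1$ atoms of arity $\le\beta$, i.e. by a relation over at most $(k+1)\beta$ object variables, and so is expressible as an FOC$_{(k+1)\beta}$ predicate. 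The first step is to fix, for each of the boundedly many (domain-determined) configuration \emph{schemas}, a predicate $\func{reach}^d_{(g',c')}$ that the \relnn maintains at layer $d$, with intended meaning: a grounding $\bar o$ lies in $\func{reach}^d_{(g',c')}$ iff the grounded target can be achieved within $d$ steps while maintaining the grounded constraints, using only rules of $\gR$.

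Next I would specify the layer dynamics. The base case is read off the input: $\func{reach}^0_{(g',c')}(\bar o)$ holds iff the grounded $g'$ is already true in $s_0$. To pass from layer $d$ to $d+1$, the circuit enumerates generalized regression rules --- equivalently ground actions $a$ with $g'\in\effp(a)$ and $c'\cap\effm(a)=\emptyset$, together with a serialization of $\pre(a)$ into $p_1,\dots,p_m$ and admissible constraint drops $c_i\subseteq\{p_1,\dots,p_{i-1}\}$ --- and asserts the new configuration reachable once each $(p_i,\,c'\cup c_i)$ is reachable (with the sub-plan lengths summing appropriately) and $a$ is then fired. The free arguments of $a$ not occurring in $g'$ are bound by boundedly many nested quantifiers, so each layer-to-layer transition costs $O(1)$ \relnn layers; since, restricted to $\gR$, S-GRS unfolds into the length-$T$ optimal plan and each of its actions is discovered after $O(1)$ further layers, the total depth is $O(T)$. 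Whenever a base case becomes newly satisfied the circuit also records the associated $a$ as a candidate first action and, exactly as in \textsc{bwd}, keeps the one belonging to a shortest witness. Every operation above is a fixed FOC$_{(k+1)\beta}$ formula, so by the upper-bound half of Lemma~\ref{thm:fol-bound} it is realized by a constant number of \relnn layers of breadth $(k+1)\beta$, yielding the claimed $\relnn[O(T),(k+1)\beta]$.

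Correctness then rests on the optimality and completeness of S-GRS on $\var{OSG}(s_0)$ and on the SOS-width guarantee that $\gR$ alone suffices; what remains, and what I expect to be the main obstacle, is that Algorithm~\ref{alg:backward-lpk} threads the \emph{intermediate} states $s_i=\gT(s_{i-1},\pi_i[-1])$ through its recursive subgoal calls, whereas a feed-forward circuit computing from $s_0$ cannot literally reconstruct these states. The resolution is to argue that the state-independent ``reachable while maintaining $c'$'' predicates are exactly the right quantity to propagate: optimal serializability (Definition~\ref{def:opt-serializable}) ensures that optimal sub-plans for the prefixes $p_1\wedge\cdots\wedge p_i$ compose, and condition~(c) of the SOS-width definition ensures that achieving $p_i$ under the reduced constraint $c'\cup c_i$ already preserves $\{p_1,\dots,p_{i-1}\}$, so no earlier subgoal needs to be re-verified at an intermediate state --- the actual world state is consulted only at the base case, and there only $s_0$ is needed. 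Finally, the side claim $k+1\le k_{\text{BWD}}$ is immediate by comparing the two searches: any goal set arising in \textsc{bwd} must simultaneously contain the atom being regressed and the atoms it is being regressed against, hence has at least the $k+1$ atoms an S-GRS configuration tracks, so the S-GRS circuit is never wider than the \textsc{bwd} circuit of Theorem~\ref{thm:comp-bwd}.
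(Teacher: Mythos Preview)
Your approach is genuinely different from the paper's. The paper does \emph{not} compile S-GRS directly; instead it invokes Theorem~\ref{thm:iw-sufficiency} (SOS width $k$ implies forward width $\le k+1$) and then compiles the \emph{forward} algorithm $\func{IW}(k+1)$. The key trick there is that for every $(k{+}1)$-tuple $t$ the network maintains not only ``$t$ is reachable from $s_0$ in $\le d$ steps'' but also the full state $s_t$ reached along the optimal witness; each layer then simulates one forward action on that stored state. The paper explicitly remarks that a direct backward compilation (your route) is possible but ``more complicated and is omitted here.''

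Your proposal has a real gap precisely at the point you flag as the main obstacle. You define $\func{reach}^d_{(g',c')}$ from $s_0$ with the semantics ``achieve $g'$ within $d$ steps while \emph{maintaining} $c'$,'' but the constraint sets $c'$ that arise inside S-GRS are atoms established at intermediate states, not atoms of $s_0$; when $c'\not\subseteq s_0$ your predicate is vacuously false and the recursion stalls. The ``sub-plan lengths summing appropriately'' rule compounds this: each $\func{reach}^{d_i}_{(p_i,c'\cup c_i)}$ is measured from $s_0$, so summing $d_i$ double-counts the work done to reach the prefix, and you will not hit $\func{reach}^{T}$ at the top. Your appeal to condition~(c) of the SOS definition is the right ingredient but misread: (c) says $\func{OptSearch}(s_0,\,p_i\cup c_i\cup c,\,\emptyset)\subseteq\func{OptSearch}(s_0,\,\{p_1,\dots,p_i\}\cup c,\,\emptyset)$, i.e.\ any optimal plan achieving the \emph{conjunction} $p_i\wedge c_i\wedge c$ from $s_0$ already achieves the full prefix conjunction $\{p_1,\dots,p_i\}\wedge c$. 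It is a statement about reaching conjunctions from $s_0$, not about preserving $c_i$ along the way. The fix is to redefine $\func{reach}^d_{(g',c')}$ as ``the conjunction $g'\wedge c'$ is optimally reachable from $s_0$ in $d$ steps'' and to use the one-step recursion $\func{reach}^{d}_{(g',c')}\Leftarrow\func{reach}^{d-1}_{(p_m,\,c'\cup c_m)}$ through the \emph{last} precondition only --- no summing, one action per layer. Even then you must argue that enumerating all width-$k$ generalized rules (not only those in the unknown $\gR$) introduces no false positives, which is exactly why the paper's forward construction, carrying $s_t$ explicitly, sidesteps the issue.
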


\xhdr{Remark.} The compilation in Theorem~\ref{thm:comp-sgrs} can generate finite-breadth, unbounded-depth \relnn~circuits for more problems than the compilation in Theorem~\ref{thm:comp-bwd}. For example, in Blocks World, the number of atoms in the goal sets in \textsc{bwd} is unbounded. However, since the problem is of constant SOS width, it can be compiled into a finite-breadth circuit. When the search horizon $T$ is finite, both the SOS width and $k_{\text{BWD}}$ will be finite because there are only a constant number of actions that can be applied to update the goal set / constraint set. So a problem can be compiled into a finite-depth, finite-breadth \relnn~circuit with BWD compilation if and only if it can be compiled with S-GRS compilation, although S-GRS compilation may generate smaller circuits.

\begin{figure}
    \centering
    \includegraphics[width=\textwidth]{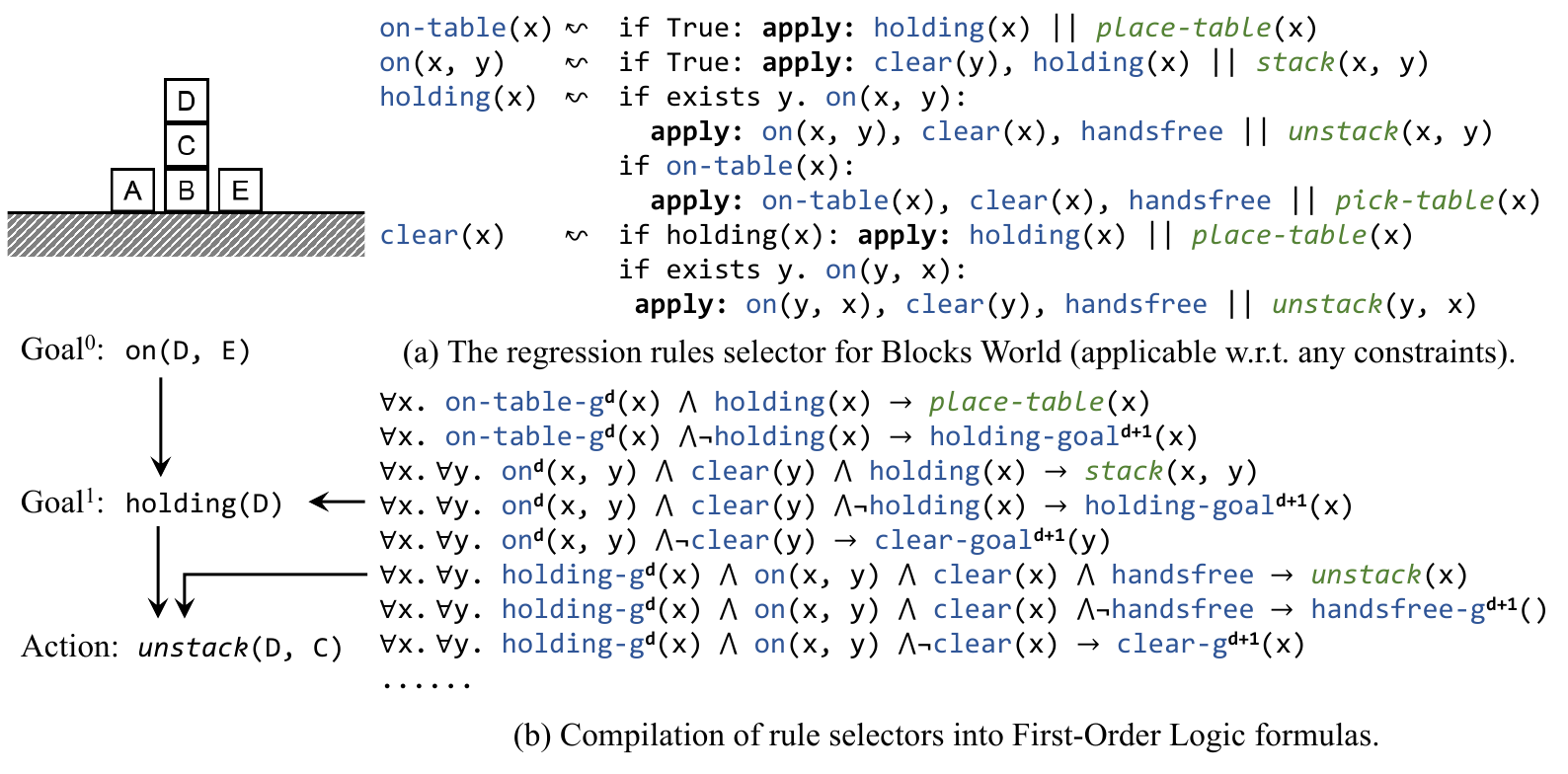}
    \vspace{-1em}
    \caption{State-dependent regression rule selector in the Blocks World domain. For brevity, we have omitted atoms in the constraint set. All rules listed above are applicable under any constraints.}
    \label{fig:rules-bw}
    \vspace{-1em}
\end{figure}

\vspace{-0.5em}
\subsection{Compilation of S-GRS with a Regression Rule Selector}
\vspace{-0.5em}
\label{sec:compilation2}
Unfortunately, both constructions in the previous section require a depth-$O(T)$ \relnn~circuit, which is impractical for most real-world problems, as $T$ can be exponential in the number of objects in a domain. In order to compile the goal regression search algorithm into a smaller circuit, we consider scenarios where the rule used to construct the (optimal) plan can be computed without doing a full search. Here the idea is to bring in human priors or machine learning to learn a low-complexity {\it regression rule selector} that directly predicts which goal regression rule to use. There exists such a rule selector for some problems (\eg, Blocks World) but not others (\eg, Sokoban).

Formally, we define a state-dependent {\it regression rule selector} (RRS) $\func{select}(s, g, \var{cons})$, which returns an ordered list of preconditions and a ground action $a$. Its function body can be written in the following form: $g^{\var{cons}} \leftsquigarrow \text{if}~\rho(s, \var{cons}): p_1^{\var{cons}}, p_2^{\var{cons} \cup \{p_1\}}, \cdots, p_m^{\var{cons} \cup \{p_1, \cdots, p_{m-1}\}} ~\|~ a,$ which reads: in order to achieve $g$ under constraint $\var{cons}$ at state $s$, if $\rho(s, \var{cons})$ is true, we can apply the regression rule $p_1, p_2, \cdots, p_m~\|~a$. Formally, $\func{select}$ is composed of a collection of rules. Each rule is a tuple of $\langle \var{args}, \rho, \var{pre}, a, g, \var{cons} \rangle$, where $\var{args}$ is a set of variables, $\rho$ is a FOL formula that can be evaluated at any given state $s$, \var{pre} is an ordered list of preconditions, $a$ is a ground action, $g$ is the goal atom, and \var{cons} is the constraint. $\rho, \var{pre}, a, g$, and \var{cons} may contain variables in $\var{args}$.
Such a rule selector can be implemented by hand (as a set of rules, as illustrated in \fig{fig:rules-bw}a for the simple BlocksWorld domain), or learned by a \relnn~model.

The main computational advantage is that now, for a given tuple $(\var{s}, g, \var{cons})$, there is a single ordered list of preconditions and final action. Therefore, given the function $\func{select}$, we can simply construct a sequence of actions that achieves the goal by recursively applying the rule selector. In this section, we will first discuss scenarios where such rule selectors can be computed with shallow \relnn~circuits.

\xhdr{Circuit complexity of regression rule selectors.} We first discuss scenarios where there exist finite-depth, finite-breadth circuits for computing the regression rule selector. Note that there are three ``branching'' factors in choosing the regression rule: 1) the operator to use, 2) the order of the preconditions, and 3) the binding of ``free'' variables that are not mentioned in the goal atom.

First, in many cases, a simple condition on the state determines which operator is appropriate to use, and the preconditions can be achieved in a fixed order.
Second, in many cases, the unbound variables in a rule are either determined by early preconditions or can be bound arbitrarily.  When these values must be chosen very carefully (\eg,when there are resource constraints), then a simple regression rule selector may not exist.
For example, in Blocks World, to achieve $\prop{clear}{B}$, then the optimal way is to perform $\func{unstack}(A, B)$, where $\object{A}$ is the object on $\object{B}$. Furthermore, in this case, the precondition order is fixed: $\gregress{\prop{clear}{B}}{\prop{on}{A, B}, \prop{clear}{A}, \prop{handsfree}{}}{\prop{unstack}{A, B}}$ for arbitrary pairs of objects $\object{A}, \object{B}$.
In the following, we consider cases where the regression rule selector can be computed with a first-order logic formula, therefore a finite-depth and finite-breadth \relnn~circuit.

\xhdr{Serializability of RRS.} Given a regression rule selector $\func{select}$, we can obtain a single trajectory (if \var{g} is reachable from \var{s} under constraints \var{cons}). We define $\func{tr}^{\func{select}}(s, g, \var{cons})$ as the trajectory returned by recursive application of $\func{select}$. It returns $\perp$ if no plan can be found.

\begin{definition}[RRS Serializability]
A regression rule selector is serializable if and only if the following condition holds for any state $s$, any goal $g$, and any constraint set. Consider the regression rule returned by $\func{select}(s, g, \var{cons}) \leftsquigarrow p_1, p_2, \cdots, p_k ~\|~ a$. If $g$ is achievable from $s$, then it can be achieved via the concatenation of the following trajectories: $\overline{a}_1 = \func{tr}^{\func{regress}}(s, p_1, \var{constraint})$, $\overline{a}_2 = \func{tr}^{\func{regress}}(\gT(s, \overline{a}_1), p_2, \var{constraint} \cup \{p_1\})$, $\cdots$, $\overline{a}_k$, and $\{a\}$.
\end{definition}

\paragraph{Compiling policy neural networks.}
If there exists a regression rule selector that is computable with a finite-depth, finite-breadth \relnn~circuit, it will be possible to construct a policy for the original problem with another \relnn~circuit. This construction can be {\em dramatically} more efficient than the general constructions in Section~\ref{sec:compilation1}. Let $k$ denote the ``width'' of the regression rule selector, \ie, the maximum size of the constraints we need to keep track of when applying $\func{select}$.

\begin{theorem}[Compilation of S-GRS with a regression rule selector]
Given a regression rule selector {\em select} (\ie, the state-constraint condition function $\rho$) that can be computed by a relational neural network in $\relnn[D_r, B_r]$, for any planning problem $P$, if $\func{tr}^{\func{select}}(s_0, g, \emptyset) \neq \perp$, letting $d$ be the depth of the regression tree, then there is a \relnn circuit for the problem in $\relnn[O(d \cdot D_r), \max(B_r, \beta)]$, where $\beta$ is the maximum arity of predicates.
\label{thm:grs-realizability}
\end{theorem}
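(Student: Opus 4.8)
As in the proofs of Theorems~\ref{thm:comp-bwd} and \ref{thm:comp-sgrs} we simulate the search layer by layer, but now instead of branching over all subgoals we follow the single descent dictated by $\func{select}$. The plan is to realize the policy as a vertical stack of $d$ structurally identical \emph{blocks}, where block $\ell$ simulates the $\ell$-th level of the recursive descent that defines $\func{tr}^{\func{select}}$. Every block sees the fixed input state $s$ (encoded, as usual, by one Boolean relation per domain predicate), and additionally carries forward three quantities produced by the previous block: the current goal atom $g^\ell$, the current constraint set $C^\ell$, and a pair consisting of a nullary ``done'' flag and an ``answer-action'' slot. Crucially, although $|C^\ell|\le k$, a \emph{set} of atoms of arity $\le\beta$ is representable relationally with breadth only $\beta$ (one marked Boolean tensor per predicate), not $k\cdot\beta$; likewise $g^\ell$ and the eventual output action need breadth $\le\beta$. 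Block $0$ is fed $g^0=g$, $C^0=\emptyset$, done $=$ false.

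Inside a block I would compute, in order: (i) evaluate $\rho$ on $(s,g^\ell,C^\ell)$ using the hypothesized $\relnn[D_r,B_r]$; (ii) combine $\rho$'s output with the finite, domain-fixed collection of rule templates and the filters $\func{goal}(r)=g^\ell$ and $C^\ell\cap\effm(\func{action}(r))=\emptyset$ to identify the chosen template $r$ together with its variable binding, thereby materializing the ordered preconditions $p^\ell_1,\dots,p^\ell_{k_\ell}$ and the ground action $a^\ell$ (since $|\var{args}_r|\le B_r$ and the number of preconditions of an operator is a domain constant, this extraction costs only $O(1)$ extra depth and breadth $\le\max(B_r,\beta)$); (iii) test which $p^\ell_j$ already hold in $s$ --- if all do, set done $\leftarrow$ true and write $a^\ell$ into the answer slot; otherwise take $i=\min\{j:p^\ell_j\notin s\}$ and emit $g^{\ell+1}\leftarrow p^\ell_i$ and $C^{\ell+1}\leftarrow C^\ell\cup\{p^\ell_1,\dots,p^\ell_{i-1}\}$; (iv) if done was already true on entry, pass all carried quantities through unchanged. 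Each of (ii)--(iv) is a first-order (counting) computation with constantly many quantifier alternations over at most $\max(B_r,\beta)$ variables, so by Lemma~\ref{thm:fol-bound} the whole block is a sub-circuit in $\relnn[O(D_r),\max(B_r,\beta)]$. Stacking $d$ blocks and reading off the answer slot of the last one yields a \relnn circuit for the problem in $\relnn[O(d\cdot D_r),\max(B_r,\beta)]$.

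For correctness I would argue by induction on the depth of the regression tree that, for any state $s$ reached along the executed trajectory, the first action of $\func{tr}^{\func{select}}(s,g,\emptyset)$ equals $a^\ell$ for the shallowest level $\ell$ at which the chosen rule's preconditions are all already true in $s$: unrolling the recursive definition of $\func{tr}^{\func{select}}$, the first non-empty sub-trajectory at each node corresponds exactly to that node's first unachieved precondition, which is precisely the child the stack descends into; and serializability of the rule selector guarantees that after executing this action and re-querying the policy one obtains the tail of the trajectory, so recurrent application drives $s_0$ into a state satisfying $g$ (with the degenerate case $g\in s_0$ handled by the top block reporting a no-op). The descent reaches such a level within at most $d$ steps because the corresponding node lies at depth $\le d$ in the regression tree, which is why $d$ blocks suffice (constant off-by-one slack absorbed into the $O(\cdot)$).

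I expect the main obstacle to be steps (ii)--(iii): showing that reading the concrete free-variable bindings, ordered preconditions, and ground action off $\rho$'s relational output, then locating the first unachieved precondition and forming the augmented constraint set, all stay inside $\text{FOC}_{\max(B_r,\beta)}$ with only $O(1)$ extra quantifier depth --- so that the per-block cost is genuinely $O(D_r)$ and the breadth never grows with $k$ or with the number of objects. A secondary subtlety is verifying the monotone ``done/answer'' propagation down the stack and pinning down the precise relationship between the regression-tree depth $d$ and the number of blocks.
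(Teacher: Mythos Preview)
Your proposal is correct and follows essentially the same construction as the paper: both build a depth-$d$ stack of identical blocks, each of which evaluates $\rho$ on the current $(g,\var{cons})$ pair, reads off the selected rule's ordered preconditions, locates the first one not already true in $s$, and either descends to that precondition (augmenting the constraint set with the satisfied prefix) or emits the rule's action when all preconditions hold. The paper's proof is considerably terser---it writes the two inference rules and asserts the breadth bound in one line---so your explicit done/answer-propagation mechanism, your inductive correctness argument matching the circuit's output to the first action of $\func{tr}^{\func{select}}$, and your justification that the constraint \emph{set} is stored relationally (hence breadth $\beta$ rather than $k\cdot\beta$) are all useful additions that the paper leaves implicit.
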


\vspace{-1em}
\begin{proof}
For any step $i$, let $\var{pre}_{\le i}$ be the length $i$ prefix of the precondition list. We define $\var{Goal}^d$ to be the single goal atom and set of constraints we need to satisfy at depth $d$. We have the following rules:
\begin{align*}
\small
    \forall d. \forall i < |\var{pre}|.& \left(\var{Goal}^d = (g, \var{cons})\right) \wedge \rho(s, \var{cons}) \wedge \var{pre}_{\le i}  \wedge \neg \var{pre}_{i+1} \Rightarrow \left(\var{Goal}^{d+1} = (\var{pre}_{i+1}, \var{cons} \cup \var{pre}_{\le i})\right)\\
    \forall d.& \left(\var{Goal}^d = (g, \var{cons})\right) \wedge \rho(s, \var{cons}) \wedge \var{pre} \Rightarrow a
\end{align*}
Here the evaluations of preconditions $\var{pre}$ are based on the current state (input to the policy network) and they can be evaluated in parallel. We illustrate this construction in \fig{fig:rules-bw}b. The first set of rules computes the next subgoal (and constraint set) to achieve, while the second set of rules outputs the next action when all preconditions of the current rule have been satisfied (which will be the output of the entire policy). Since at each layer $d$, we only need to keep track of one tuple of $(g, \var{cons})$, the breadth of the circuit is $\max(B_r, \beta)$, where $\beta$ is required to store the input state.
\end{proof}
\vspace{-1em}

This construction yields a significant reduction in breadth (from tracking all conjunctions up to $k_{\textit{BWD}}$ atoms to only one tuple of $(g, \var{cons})$), as well as a reduction of the number of layers for the \relnn policy, depending on the domain. In particular, if the average number of preconditions that need to be recursively solved in each rule is $b$ (roughly equal to the number of preconditions in each rule), the depth of the regression tree is $d = O(\log_b T)$, where $T$ is the planning horizon.

Finally, for some problems, even if we have a regression rule selector, to compute a plan, we may still need an unbounded depth of search steps (\ie, $d$ depends on the number of objects in the domain). For example, in Blocks World, to achieve $\prop{clear}{A}$, the number of steps needed is proportional to the number of objects on top of $\object{A}$. There are two ways to build a relational neural network policy in this case. First, we can allow the network to have a variable number of iterations by applying the same layer recurrently several times. Second, %
there might be ``shortcuts'' that we can learn. We discuss both solutions in Appendix~\ref{sec:supp-policy-discussion}. These methods allow us to construct finite-depth \relnn circuits for a variety of domains, including Blocks World, and path-finding in any acyclic maze.

\vspace{-0.5em}
\section{Problem Analysis and Results}
\label{sec:experiment}
\newcommand{\defeq}{\overset{\text{\tiny def}}{=}}
\newcommand{\ldefeq}{\mathrel{\raisebox{-0.3ex}{$\defeq$}}}

\begin{table}[tp]
\begin{minipage}{0.4\textwidth}
\centering \small
\begin{tabular}{lll}
\toprule
\mycellc{Problem} & \mycellc{Constant\\Breadth?} & \mycellc{Depth} \\
\midrule
BlocksWorld & \cmark~($k=1$) & Unbounded \\
Logistics   & \cmark~($k=0$) & Unbounded \\
Gripper     & \cmark~($k=0$) & Constant  \\
Rover       & \cmark~($k=0$) & Unbounded \\
Elevator    & \cmark~($k=0$) & Unbounded \\
Sokoban     & \xmark  & Unbounded \\
\bottomrule
\end{tabular}
\vspace{0.5em}
\caption{Width and circuit complexity analysis for 6 problems widely used in AI planning communities. For problems with a constant circuit breadth, we also annotate their regression width $k$. ``Unbounded'' depth means that the depth depends on the number of objects.}
\label{tab:ipcwidth}
\vspace{-2em}
\end{minipage}
\hfill
\begin{minipage}{0.59\textwidth}
\centering \small
\setlength{\tabcolsep}{2pt}
\begin{tabular}{llllll}
\toprule
    Task & Model & n=10 & n=30 & n=50 \\
    \midrule
    As3 & $\relnn[1, 2]$ & 0.2$_{\pm 0.05}$ & 0.02$_{\pm 0.01}$ & 0.0$_{\pm 0.0}$ \\
    As3 & $\relnn[2, 2]$ & {\bf 1.0$_{\pm 0.0}$} & {\bf 1.0$_{\pm 0.0}$} & {\bf 1.0$_{\pm 0.0}$} \\
    Log.~~ & $\relnn[3, 2]$ & 1.0$_{\pm 0.0}$ & 0.30$_{\pm0.03}$ & 0.23$_{\pm0.05}$ \\
    Log.~~ & $\relnn[f(n), 2]$~ & {\bf 1.0$_{\pm 0.0}$} & {\bf 1.0$_{\pm 0.0}$} & {\bf 1.0$_{\pm 0.0}$} \\
    BW & $\relnn[3, 2]$ & 1.0 / 2.9$_{\pm 0.4}$ & 1.0 / 10.2$_{\pm 0.45}$ & 1.0 / 15.7$_{\pm 2.5}$ \\
    BW & $\relnn[f(n), 2]$ & {\bf 1.0 / 2.5$_{\pm 0.4}$} & {\bf 1.0 / 3.1$_{\pm 0.5}$} & {\bf 1.0 / 3.5$_{\pm 0.4}$} \\
\bottomrule
\end{tabular}
\vspace{1.3em}
\caption{Success rate of learned policies in different environments. For Assembly3 (As3) and Logistics (Log.), we show the success rate. For Blocks World (BW), we show the success rate / average solution length. We choose $f(n)\ldefeq n/5+1$ for Logistics and $f(n)\ldefeq n/10+3$ for BlocksWorld. In the notation of $\relnn[D,B]$, $D$ is the number of layers and $B$ is the maximum arity of edges.}
\label{tab:combined-results}
\vspace{-2em}
\end{minipage}
\vspace{-1em}
\end{table}

We first analyze the regression width and circuit complexities for familiar AI planning problems. Table~\ref{tab:ipcwidth} summarizes the results on a simplified setting where the goal of the planning problem is a single atom. In summary, most of these problems have a constant breadth (\ie, the regression width of the corresponding problem is constant) except for Sokoban. Most problems have an unbounded depth: that is, the depth of the circuit will depend on the number of objects (\eg, the size of the graph or the number of blocks). For problems in this list, when there are multiple goals, usually the goals are not serializable (in the optimal planning case). If we only care about satisficing plans, for Logistics, Gripper, Rover, and Elevator, there exists a simple serialization for any conjunction of goal atoms (basically achieving one goal at a time). See Appendix~\ref{sec:supp-ipcwidth} for more detailed proofs and additional discussions of generalization to multiple goals and suboptimal plans.

Next, given this analytical understanding of the relationship between planning problems and their policy circuit complexity, we perform some simple experiments to see whether that relationship is borne out in practice. Relational neural networks have been demonstrated to be effective in solving some planning problems~\citep{dong2019neural,jiang2019neural,li2020towards}, but their complexity has not been systematically explored. We consider two families of problems: one predicted to require finite depth and one predicted to require unbounded depth. For all tasks, we use Neural Logic Machines~\citep{dong2019neural} as the model; we set the number of hidden neurons in the MLP layers to be sufficiently large (64 in all experiments) so that it is not the bottleneck for network expressiveness. We use the same encoding style for all problems (the graph-like relationship). Therefore, whether different problems have bounded or unbounded width primarily depends on the available goal regression rules, how these rules can be serialized, and the width of the rules. We show the average performance across 3 random seeds, with standard errors.

\xhdr{Assembly3: finite depth circuits.} The domain Assembly3 contains $n$ objects. Each object has a category, chosen from the set $\{A, B, C\}$. The goal of the task is to select three objects $o_A$, $o_B$, and $o_C$ sequentially, one for each category, while satisfying a matching constraint: $\func{match}(o_A, o_B)$ and $\func{match}(o_B, o_C)$.
Therefore, to select the first object (\eg, an object of type $A$), the policy needs to perform two layers of goal regression (first find the set of possible $B$-typed objects that match the object, and then find another $C$-typed object). We trained two models. Both models have breadth 2, but the first model has only 1 layer (theoretically not capable of representing the two-step goal regression), while the second model has 2 layers. Shown in \tbl{tab:combined-results}, we train models on environments with 10 objects and test them on environments with 10, 30, and 50 objects. The first model is not able to learn a policy for the given task, while the second exhibits perfect generalization.

\xhdr{Logistics: unbounded depth circuits.} We also construct a simple Logistics domain with only cities and trucks (no airplanes), in which the graphs are not strongly connected (by first sampling a directed tree and then adding forward-connecting edges). We train the policy network on problems with less than n=10 cities and test it on n=30 and n=50. Here, we trained two policies. The first policy has a constant depth of 3, and the second policy has a depth of $D = n/5 + 1$, which is a manually chosen function such that it is larger than the diameter of the graph. Shown in \tbl{tab:combined-results}, the first model fails to generalize to larger graphs due to its limited circuit depth.
Intuitively, to find a path in a graph by recursively applying the regression rule, we need a circuit of an adaptive depth (proportional to the length of the path).
By contrast, the second model with adaptive depths generalizes perfectly.

\xhdr{BlocksWorld-Clear: unbounded depth circuits.} Based on the goal regression analysis, in BlocksWorld, in order to achieve the goal atom $\prop{clear}{A}$ for a specific block $A$, we would need a circuit that is as deep as the number of objects on $A$ in the initial state. We follow the same training and evaluation setup as in Logistics.
Our analysis focuses on the length of the generated plan. Shown in \tbl{tab:combined-results}, although both policy networks accomplish the task with a 1.0 success rate (because there is a trivial policy that randomly drops clear blocks onto the table), the policy whose depth depends on the number of blocks successfully finds a plan that is significantly shorter than fixed-depth policy. This suggests the importance of using RelNNs with a non-constant depth for certain problems.

\vspace{-1em}
\section{Related Work and Conclusion}
\vspace{-0.5em}
Most of the existing work on planning complexity considers the NP-Completeness or PSPACE-Completeness of particular problems, such as the traveling salesman problem~\citep{karp1972reducibility}, Sokoban~\citep{culberson1997sokoban}, and Blocks World~\citep{gupta1992complexity}. In general, the decision problem of plan existence is PSPACE-Complete for atomic STRIPS planning problems~\citep{bylander1994computational}. Seminal work on fine-grained planning problem complexity introduces planning width~\citep{chen2007act} and the IW algorithm~\citep{lipovetzky2012width,drexler2022learning}. Our work is greatly inspired by these ideas and strives to connect them with circuit complexity.

The concept of serialized goal regression is not completely new. These rules are particularly related to the methods in hierarchical task networks~\citep{erol1994htn} and can also be seen as special cases of hierarchical goal networks~\citep{alford2016hierarchical}, derived by dropping preconditions~\citep{sacerdoti1974planning}. In addition, others, including \citet{korf1987planning} and \citet{barrett1993characterizing}, have characterized different degrees of serializability of subgoals, but their analysis is not as fine-grained as this.

Our circuit complexity analysis builds on existing work on relational neural network expressiveness, in particular, GNNs and their variants. \citet{xu2019powerful} provides an illuminating characterization of GNN expressiveness in terms of the WL graph isomorphism test.  \citet{azizian2020expressive} analyze the expressiveness of higher-order Folklore GNNs by connecting them with high-dimensional WL-tests. \citet{barcelo2020logical} and \citet{luo2022expressiveness} reviewed GNNs from the logical perspective and rigorously refined their logical expressiveness with respect to fragments of first-order logic. Our work extends their results, asking the question of what planning problems a \relnn can solve.

Our work is also related to how GNNs may generalize to larger graphs (in our case, planning problems with an arbitrary number of objects). \citet{xu2020can, xu2021neural} have studied the notion of {\it algorithmic alignment} to quantify such structural generalization. \citet{dong2019neural} provided empirical results showing that NLMs generalize in Blocks World and many other domains. \citet{buffelli2022sizeshiftreg} introduced a regularization technique to improve GNNs' generalization and demonstrated its effectiveness empirically. \citet{xu2021neural} also showed empirically on certain algorithmic reasoning problems (\eg, max-Degree, shortest path, and the n-body problem). In this paper, we focus on constructing policies that generalize instead of algorithms, using a similar idea of compiling specific (search) algorithms into neural networks.

\xhdr{Conclusion.} To summarize, we have illustrated a connection between classical planning width, regression width, search complexity, and policy circuit complexity. We derive upper bounds for search and circuit complexities as a function of the regression width of problems and planning horizon. These results provide an explanation of the success of relational neural networks learning generalizable policies for many object-centric domains. The compilation algorithms highlight that when there are resource constraints such as free space, agents' inventory size, \etc, a small policy circuit may not exist. This includes cases such as Sokoban and general task and motion planning. Furthermore, our idea of serialization can be generalized to hierarchical decomposition of problems; it can be potentially extended to other domains such as optimization and planning under uncertainty.

Although all of the analyses in this paper have been done for fully discrete domains by analyzing first-order logic formulas, understanding the implications in relational neural network learning is important because, ultimately, \textit{RelNNs} can be integrated with perception and continuous action policy networks to solve realistic robotic manipulation problems (\eg, physical Blocks World~\citep{li2020towards}). It is challenging to directly generalize our proof to scenarios where states and actions are continuous values because in these cases, there will be an infinite number of possible actions (\eg, the choice of grasping poses, trajectories, \etc). We believe that it is an interesting question to investigate similar ``width''-like definitions in sampling and discretization-based continuous planning algorithms. So far, we have only analyzed the circuit complexity of policies for individual planning problems; an interesting future direction is to extend these analyses to the learning and inference of ``general'' policies such as large language models that can generate plans for many different planning domains.

\paragraph{Acknowledgement.} We thank anonymous reviewers for their valuable comments.
We gratefully acknowledge support from ONR MURI grant N00014-16-1-2007; from the Center for Brain, Minds, and Machines (CBMM, funded by NSF STC award CCF-1231216); from NSF grant 2214177; from Air Force Office of Scientific Research (AFOSR) grant FA9550-22-1-0249; from ONR MURI grant N00014-22-1-2740; from ARO grant W911NF-23-1-0034; from the MIT-IBM Watson AI Lab; from the MIT Quest for Intelligence; and from the Boston Dynamics Artificial Intelligence Institute.
Any opinions, findings, and conclusions or recommendations expressed in this material are those of the authors and do not necessarily reflect the views of our sponsors.

{
\small
\bibliographystyle{plainnat}
\bibliography{reference}
}
\clearpage

\begin{center}{\bf \Large Appendix}
\end{center}
\vspace{0.5em}

\appendix

The appendix of the paper is organized as the following. In Appendix~\ref{sec:supp-search}, we provide the proof details and more discussion with related work on search complexity. In Appendix~\ref{sec:supp-policy}, we provide the proof details and more discussion on policy circuit complexity. Next, in Appendix~\ref{sec:supp-ipcwidth}, we include a detailed analysis of regression widths and circuit complexities for empirical planning problems. Finally, in Appendix~\ref{sec:supp-experiment}, we describe environmental setups and implementation details for the experiments.

\section{Proofs and Discussions for Search Complexity}
\label{sec:supp-search}

\subsection{Complete and Optimal Version of S-GRS}
\label{sec:supp-real-complete-sgrs}

Algorithm~\ref{alg:backward} shows a complete and optimal version of the serialized goal regression search algorithm. The key difference between this variant and the simplified one presented in the paper is that now we keep track of multiple possible trajectories that can achieve the specified goal. The algorithm is complete because for any set of precondition atoms, in the optimal trajectory, there will always be an order in which they have been achieved (note that some of the sub-trajectories can be empty, indicating that while planning for the prefix the next subgoal has already been achieved; this covers the ``parallel'' precondition achievement case.). We show an example of this branching factor in Figure~\ref{fig:branching-resulting-state}. Note that, to make sure that the algorithm eventually terminates, we need to make sure that all trajectories do not have loops.

However, this bare algorithm can be very slow: the worst time complexity of it is $O(3^N)$ with respect to $N$, the number of atoms in $\gP_0$ --- in contrast to $O(2^N)$ for a simpler backward search algorithm\footnote{Note that \var{constraints} is always a subset of state \var{state}, so the worst complexity is $O(3^N)$.}. This is because the possible state set for different subgoals and different constraints may overlap, so it will ``waste'' time during the search. Therefore, this algorithm may not be of practical use.

\begin{algorithm}[hp]
\begin{algorithmic}
\Function{grsopt}{$s_0$, \var{g}, \var{cons}}
    \If{$g \in s_0$}
        \Return $(s_0)$
    \EndIf
    \State \var{all\_possible\_t} = empty\_set()
    \For{$r$ $\in$ $\gR_0$: $\var{cons} \cap \effm(\func{action}(r)) = \emptyset \wedge \func{goal}(r) = \var{g}$}
        \State $p_1, p_2, \cdots, p_k$ = subgoals(\var{r})
        \If{$\exists p_i$ s.t. $p_i \in \var{goal stack}$}
            \State \textbf{continue}
        \EndIf
        \State $\var{possible\_t} = \{ \emptyset \}$
        \For{$i$ in $1, 2, \cdots, k$}
            \State \var{next\_possible\_t} = empty\_set()
            \For {\var{prefix\_t} in \var{possible\_t}}
                \State \var{imm\_state} = $\func{last\_state}(\var{prefix\_t})$
                \For {\var{nt} $\in$ \Call{grsopt}{\var{imm\_state}, $p_i$, $\var{cons} \cup \{p_1, \cdots, p_{i-1}\}$}}
                    \State $\var{next\_possible\_t} = \var{next\_possible\_t} \cup \{ \var{prefix\_t} + \var{nt}\}$
                \EndFor
            \EndFor
            \State \var{possible\_t} = \var{next\_possible\_t}
        \EndFor
        \State \var{all\_possible\_t} = \var{all\_possible\_t} $\cup$ $\left\{ t + \gT\left( \textit{last\_state}(t),  \textit{action}(r) \right) |~t \in \var{possible\_t} \right\}$
    \EndFor
    \State \Return \var{all\_possible\_t}
\EndFunction
\end{algorithmic}
\caption{Serialized goal regression search with multiple optimal path tracking.}
\label{alg:backward}
\end{algorithm}

\subsection{Proof for Theorem 3.1: S-GRS Optimality and Completeness}
\label{sec:supp-sgrs-completeness}

We define the set of optimally serializable rules for a state $s$ as $\var{OSR}(s)$, and the set of single-literal goals that can be solved with $\var{OSR}(\cdot)$ from $s_0$ as $\var{OSG}(s_0)$.

\begin{proof}
Completeness: the definition of $\var{OSG}(s_0)$ asserts that $g$ can be solved by S-GRS.

Optimality: proof by induction. In the following, we prove that for any $g \in \var{OSG}(s_0)$, the trajectory returned by S-GRS is optimal.

First, we consider the set of all S-GRS calls in the regression tree that contributes to the returned trajectory. We do induction based on a distance metric. Consider any goal atoms $g \in \var{OSG}(s_0)$, let $\func{Distance}(s_0, g, \var{cons})$ be the length of the path returned by S-GRS. First, for all argument tuples with distance $0$ (\ie, $g \in s_0$ and $\var{cons} \subseteq s_0$), S-GRS will return the optimal trajectory.

Next, assuming that S-GRS will return optimal trajectories for all argument tuples with distances strictly smaller than $d$. Consider a new argument call $\func{grs}(s_0, \var{goal}, \var{cons})$ with distance $d$. Consider now all optimally serializable and goal-achieving rules $r \in \var{OSR}(s_0)$ such that $\func{goal}(r) = g$ and $\var{cons} \cap \effm(\func{action}(r)) = \emptyset$. Let $r_g$ be the rule that returns the plan during search, and $p_1 \cdots p_k$ be the serialized precondition list. First, all sub-function calls to S-GRS will return optimal plans by induction. Second, by optimal serializability of the rule $r$, we know that concatenating the sub-plans will yield an optimal plan to achieve $g$ while preserving $g$. Therefore, our induction holds.
\end{proof}

\subsection{Making Non-SOS Problem Serializable}
\label{sec:supp-make-serialization}

Here we discuss a general strategy to make non-SOS problems serializable by introducing super predicates. In particular, for a given operator, if two of the preconditions (without loss of generality, say, $p(x)$ and $q(x)$) are not serializable, we can introduce a new predicate $\textit{p\_and\_q}(x)$, and rewrite the precondition of the operator with this new super predicate.

The disadvantage of this encoding is that it will enlarge not only the predicate set but also the operator set. In particular, for any operator $o$ such that $p(x) \in \effp(o)$ or $p(x) \in \effm(o)$, we need to break the operator into two sub-operators. For example, if $p(x) \in \effp(o)$, we need to create a new operator $o_1$ that has $\neg q(x)$ in its precondition, and set $\effp(o_1) = \effp(o)$ and $\effm(o_1) = \effp(o)$. For the second operator $o_2$, it should have $q(x)$ in its precondition and $\effp(o_1) = \effp(o) \cup \{\textit{p\_and\_q}(x)\}$ and $\effm(o_1) = \effm(o)$. As the number of predicates in this super predicate increases, the number of additional operators grows exponentially.

This provides a characterization of ``what contributes to the (forward) width of a problem.'' Specifically, we have identified two sources of forward widths: non-serializability of certain operators, and the need to track constraints while doing a regression search.

\subsection{Proof for Theorem 3.2: Polynomial Hardness for SOS-k problems.}
\label{sec:supp-sgrs-complexity}

If a problem $P$ is of SOS width $k$, it can be solved in time $O(N^{k+1})$, where $N$ is the number of atoms. Here we have omitted polynomials related to enumerating all possible actions and their serializations (which are polynomial \wrt the number of objects in $\gU$).

\begin{figure}[tp]
    \centering
    \includegraphics[width=\textwidth]{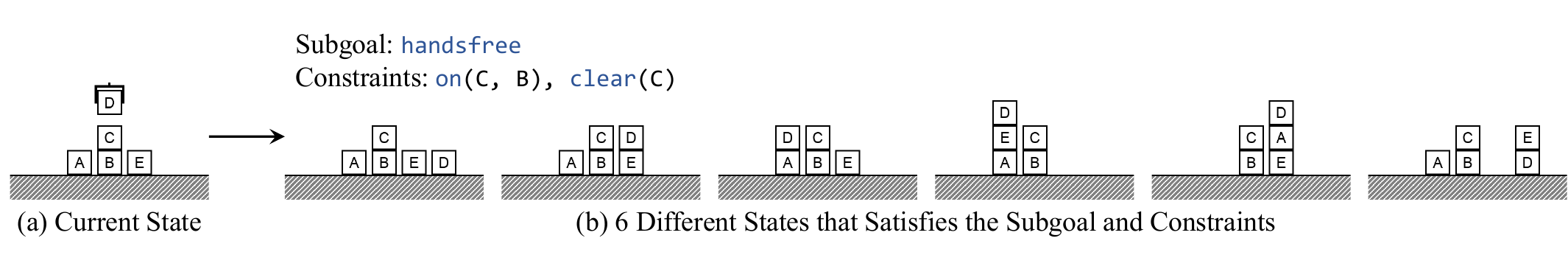}
    \caption{Illustration of the branching factor caused by tracking multiple resulting states after achieving a subgoal.}
    \label{fig:branching-resulting-state}
\end{figure}

\begin{proof} First, if we are given the rule set $\gR$, then the proof can be simply done by analyzing function calls: it is important to observe that there are at most $O(N^{k+1})$ possible argument combinations of $(\var{goal}, \var{cons})$ to function $\textsc{grs}$.

First, due to SOS condition, we only need to maintain one single optimal trajectory for each set of argument values $(s_0, \var{goal}, \var{cons})$. Next, we consider we only maintain one single optimal trajectory for each pair of $(\var{goal}, \var{cons})$, ignoring the first argument $s_0$. Note that, even if the ``cached'' trajectory does not include the input state $s_0$ we can still return the cached trajectory. This is because the optimal serializability suggests that any trajectories for achieving $(\var{goal}, \var{cons})$ can be extended to future preconditions.

Finally, we consider removing the assumption of knowing $\gR$. Note that the enumeration of all possible goal width-$k$ regression rules can be done in $O(2^k)$, which is considered constant when $k$ is small. Therefore, overall the algorithm runs in polynomial time and the solver does not need to know the set of optimally serializable regression rules $\gR$ a priori.
\end{proof}

First, the fact that the algorithm does not need to know $\gR$ {\it a priori} is important because it suggests that, as long as there exists a set of optimally serializable rules for the problem we are trying to solve, the search algorithm will find a solution, without having any additional knowledge other than the operator definitions. Intuitively, this is because doing S-GRS with additional non-optimally serializable rules (\eg, using all rules in $\gR_0$) will not falsify the completeness and optimality of the algorithm.

Second, unfortunately, the SOS condition can not be relaxed. That is, if we only assume optimal serializability of generalized regression rules and all rules have width $k$, the proof will fail. This is because even if regression rules have a small width, there will be a possibly exponential branching factor caused by ``free variables'' in regression rules that do not appear in the goal atom. Concretely, let us consider the following rule $\gregress{g(x)}{p_1(x, y)^\emptyset, p_2(x)^\emptyset}{a_1(x, y)}$. This rule is of width 0; however, since there is no guarantee that any optimal trajectory for achieving $p_2(x)$ will also achieve $p_1(x, y) \wedge p_2(x)$, during search, we can not use the cached trajectory for $(p_2(x), \emptyset)$ as the return value when we solve for $p_2(x)$ after achieving $p_1(x, y)$, which breaks the complexity analysis.

\subsection{Additional Discussion}
\label{sec:supp-sos-discussion}

\paragraph{Deriving optimally-serializable, small-width rules.} For a practical system, if we can know the set of regression rules needed, we can accelerate the search significantly. Here we discuss a strategy for deriving such rules by analyzing the domain definition itself.

If we ignore the optimality, a sufficient condition that a rule $r$ is not strong optimal serializability for a given state $s_0$ is that, there exists a precondition $p_i$ such that there is no rule $r'$ such that $\func{goal}(r') = p_i$, $\effm(\func{action}(r')) \cap p_i = \emptyset$ and $\func{goal}(r) \notin \func{subgoals}(r')$. The last condition is the goal-stack checking: if achieving a goal $g$ using rule $r$ requires achieving goal $g$ itself, then the rule is not serializable.

Putting this into practice, consider all rules for achieving $\prop{clear}{A}$. If there is another object $\object{B}$ on $\object{A}$ in $s_0$, then only the rule derived from $\prop{unstack}{B, A}$ will be SOS serializable. This is because, for any other rules $\prop{unstack}{\textit{y}, A}$, where $y \neq B$, achieving $\prop{on}{y, A}$ will involve achieving $\prop{clear}{A}$ itself. Note that this analysis can be done by static (lifted) analysis of the domain.

This analysis can be viewed as simulating a one-step goal regression search in a lifted way, and it can be extended to more steps. In some domains, this strategy will help rule out some serialization of the preconditions (such as the BlocksWorld example), but in general, whether a rule is strong optimally serializable is state-dependent and can be hard to compute.

\paragraph{Connection with other concepts in search.} Another view of the regression width is the maximum number of constraints one needs to track while applying S-GRS recursively. This notion is analogous to the treewidth of a constraint graph in constraint satisfiability problems (CSPs): intuitively, the number of backward-pointing edges~\citep{freuder1982sufficient}.

As also briefly discussed in \citet{lipovetzky2012width}, delete-relaxation heuristics are also related. In particular, the hFF heuristic can be interpreted as applying backward search by considering ``parallelizability'' in contrast to ``serializability.'' Informally, for a given state $s$ and a sequence of preconditions $p_i$, we compute trajectories for each individual $p_i$ separately and concatenate them.

\paragraph{Generalization to $\forall$-quantified preconditions.} It would be possible to extend the definition to $\forall$-quantified preconditions by simply allowing regression rules to have an unbounded number of subgoals (\eg, to handle goals such as $\forall x \func{on-table}(x)$). However, this is not helpful in general because the constraints will accumulate as the number of subgoals becomes larger, and enumerating subgoal serialization will now take $O(m!)$ where $m$ is the number of subgoals. This notion will be more helpful when we have a ``goal regression rule selector.''

\subsection{Connection with Width-Based Forward Search IW(k)}
\label{sec:supp-connection-iw}

Wecall the width of a problem defined in \citet{chen2007act} and the Iterated Width algorithm $IW(k)$~\citep{lipovetzky2012width} as forward width. Then we have:
\begin{theorem}[Sufficiency condition for forward width]
Any planning problem that has SOS-width $k$ has a forward width of at most $k+1$ and, hence, can be solved by the algorithm $\func{IW}(k+1)$.
\label{thm:iw-sufficiency2}
\end{theorem}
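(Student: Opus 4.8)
The plan is to prove the statement by taking an S-GRS solution of $P$ that uses only strong optimally-serializable width-$k$ rules (which exists by the definition of SOS width) and \emph{unrolling} it into a chain of tuples of size at most $k+1$ that witnesses forward width $\le k+1$ in the sense of \citet{chen2007act, lipovetzky2012width}; solvability by $\func{IW}(k+1)$ then follows from the completeness of $\func{IW}(k)$ on width-$k$ problems \citep{lipovetzky2012width}.

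Concretely, I would first set up the recursion tree $\mathcal{T}$ of the (optimal) S-GRS run: each internal node $\nu$ carries a goal atom $g_\nu$, a constraint set $c_\nu$, and a chosen generalized rule $\gregress{g_\nu^{c_\nu}}{p_1^{c_\nu\cup c_1},\dots,p_m^{c_\nu\cup c_m}}{a_\nu}$, with sub-plan $\pi_{\nu_1}\oplus\cdots\oplus\pi_{\nu_m}\oplus\langle a_\nu\rangle$ where $\nu_i$ is the child handling $p_i$; leaves correspond to subgoals already true and contribute empty plans. A first key observation is a width bound: by induction on depth, using $c_i\subseteq\{p_1,\dots,p_{i-1}\}$ together with the fact that every rule in $\gR$ has width $|c_\nu|+\max_i|c_i|\le k$, every node satisfies $|c_\nu|\le k$. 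Reading the actions of $\mathcal{T}$ off in execution order yields the optimal plan $\pi=a_1\cdots a_T$ with trajectory $s_0,\dots,s_T$; since each rule contributes exactly one action, there is a unique node $\nu(t)$ with $a_t=a_{\nu(t)}$, and executing $a_t$ at $s_{t-1}$ achieves $g_{\nu(t)}$ at $s_t$. I would then define the chain by $\tau_0=\emptyset$ and $\tau_t=\{g_{\nu(t)}\}\cup c_{\nu(t)}$ for $1\le t\le T$. By the width bound $|\tau_t|\le k+1$; since $g_{\nu(t)}$ has just been achieved and $c_{\nu(t)}$ is, by construction of S-GRS, maintained throughout the sub-plan of $\nu(t)$ and all its currently active ancestors, we get $\tau_t\subseteq s_t$; and because the root has $c=\emptyset$ and $a_T=a_{\text{root}}$, the final tuple is $\tau_T=\{g\}=\gG$.

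Next I would verify that $\tau_0,\dots,\tau_T$ is an admissible chain: (i) $\tau_0$ holds in $s_0$ trivially; (ii) $\tau_T$ implies the goal and every optimal plan for $\tau_T$ is an optimal plan for $P$ (immediate, as $\tau_T=\{g\}$); and (iii) for every $t$, the optimal cost of $\tau_t$ is one more than that of $\tau_{t-1}$, and \emph{every} optimal plan for $\tau_{t-1}$ extends by one action into an optimal plan for $\tau_t$. For (iii) I would show that the prefix $a_1\cdots a_t$ is itself an optimal plan for $\tau_t$ via a case analysis on the position of $\nu(t)$ relative to $\nu(t-1)$ in $\mathcal{T}$ (descending a left spine, completing the last unsatisfied subgoal of an ancestor, or firing an ancestor's closing action), in each case invoking optimal serializability of the relevant rule to argue that concatenated sub-plans remain optimal for the conjunction of achieved atoms, and invoking clause (c) of the SOS-width definition, $\func{OptSearch}(s,p_i\cup c_i\cup c,\emptyset)\subseteq\func{OptSearch}(s,\{p_1,\dots,p_i\}\cup c,\emptyset)$, to upgrade "holds for the S-GRS plan" to "holds for every optimal plan."

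I expect the main obstacle to be exactly this last upgrade: the forward-width condition quantifies over \emph{all} optimal plans for $\tau_{t-1}$, while the hypotheses are phrased in terms of the $\func{OptSearch}$ sets along the serialization chosen by S-GRS, so making clause (c) do precisely the work of closing that gap is the real content. A secondary nuisance is the bookkeeping for already-satisfied subgoals (empty sub-plans produce no achievement event and must be skipped without breaking the unit cost increments). The width bound $|\tau_t|\le k+1$ and the final reduction to $\func{IW}(k+1)$ via \citet{lipovetzky2012width} are then routine. As the Remark promises, I would also include a small example (BlocksWorld-Clear, where the SOS width is $1$ but the forward width is $1$, not $2$) to show the $k+1$ bound is not tight.
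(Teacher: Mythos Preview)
Your proposal is correct and shares the paper's core construction: the chain $\tau_t=\{g_{\nu(t)}\}\cup c_{\nu(t)}$ read off the S-GRS recursion tree, with $|\tau_t|\le k+1$ coming from the width bound on the rules. Where you diverge is in how you certify that this chain witnesses forward width $\le k+1$. You verify the admissible-chain axioms of \citet{chen2007act,lipovetzky2012width} directly, doing a case analysis on the tree position of $\nu(t)$ and invoking clause~(c) of the SOS definition to discharge the universal quantifier over optimal plans for $\tau_{t-1}$. The paper instead compresses all of this into a single novelty-by-contradiction argument: it observes that if $\tau_t$ were \emph{not} novel (i.e., already achieved at some earlier step $t'<t$), then the SOS condition lets one splice the prefix up to $t'$ directly onto the continuation after $t$, producing a strictly shorter plan and contradicting the optimality of the S-GRS trajectory; hence every action achieves a fresh $(k{+}1)$-tuple and the goal is reachable in the size-$(k{+}1)$ tuple graph of $\func{IW}(k{+}1)$.

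What each buys: the paper's contradiction avoids your case analysis and the bookkeeping for empty sub-plans entirely, and it sidesteps the ``every optimal plan'' quantifier by working only with the single S-GRS plan and its optimality. Your route, on the other hand, ties the result explicitly to the formal width definition (admissible chains) rather than to the operational novelty criterion of IW, and it makes visible exactly where clause~(c) is doing work; this is arguably more transparent about why the \emph{strong} optimal-serializability hypothesis, and not just optimal serializability, is needed. Your example at the end showing the $k{+}1$ bound is not tight matches the paper's own remark following the theorem.
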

\begin{proof}
We prove that for any trajectory returned by the S-GRS algorithm for an SOS width-$k$ problem $P$, all actions in the trajectory achieve a novel $k+1$-sized atom set. If this is true, then the final goal will be reachable from $s_0$ in the size $k+1$ tuple graph in IW search.

Consider any action $a$ in the returned plan, it corresponds to a unique rule for achieving a subgoal $g$ and a set of constraints $\var{cons}$. If the size $k+1$ tuple $\{g\} \cup \var{cons}$ has already been achieved before $a$, then due to the SOS condition, we can make the returned plan shorter. This violates the optimality of the S-GRS algorithm.
\end{proof}

Unfortunately, these two definitions are not exactly equivalent to each other. For example, consider the Blocks World domain and a state of three stacked blocks A, B and C, the goal regression rule
\[ \gregress{\prop{clear}{A}}{\prop{on}{B, A}, \prop{clear}{B}^\emptyset, \prop{handsfree}{}^{\{ \prop{clear}{B} \}}}{\prop{unstack}{y, x}}\]
is SOS. It has width 1 because while achieving $\prop{handsfree}{}$, we have to explicitly maintain $\prop{clear}{B}$; otherwise, it is possible that we directly put $C$ back onto $B$ instead of putting $C$ onto the table (both plans are optimal for achieving $\prop{handsfree}{}$) after achieving $\prop{clear}{B}$. By contrast, in IW, the Blocks World domain has width 1 instead of 2 if the goal predicate is $\func{clear}$, because the optimal plan for achieving $\prop{handsfree}{}^{\{ \prop{clear}{B} \}}$ happens to be the optimal plan for achieving $\prop{on-table}{C}$.

Proofs for (forward) widths of planning problems in IW were mostly done by analyzing solution structures. By contrast, the constructions in this paper form a new perspective to interpret planning problem width: they are subgoals and constraints that need to be satisfied during a goal regression search. This view is helpful because our analyses focus on analyzing concrete goal regression rules, and the derived results are compositional. Unfortunately, not all problems that can be solved by $\textit{IW}(k)$ have a regression width of $k-1$.

\clearpage

\section{Proofs and Discussions for Policy Circuit Complexity}
\label{sec:supp-policy}

\subsection{Proof of Theorem 4.1: Compilation of BWD}
\label{sec:supp-bwd-compilation}

Given a planning problem $P$, let T be the length of the optimal trajectory as (the planning horizon), $k_{\text{BWD}}$ be the maximum number of atoms in the goal set in $\textsc{bwd}$, and $\beta$ be the maximum arity of atoms in the domain. The backward search algorithm \textsc{bwd} can be compiled into a relational neural network in $\relnn[O(T), \beta \cdot k_{\text{BWD}}]$.

\begin{figure}[hp]
    \centering
    \includegraphics[width=0.9\textwidth]{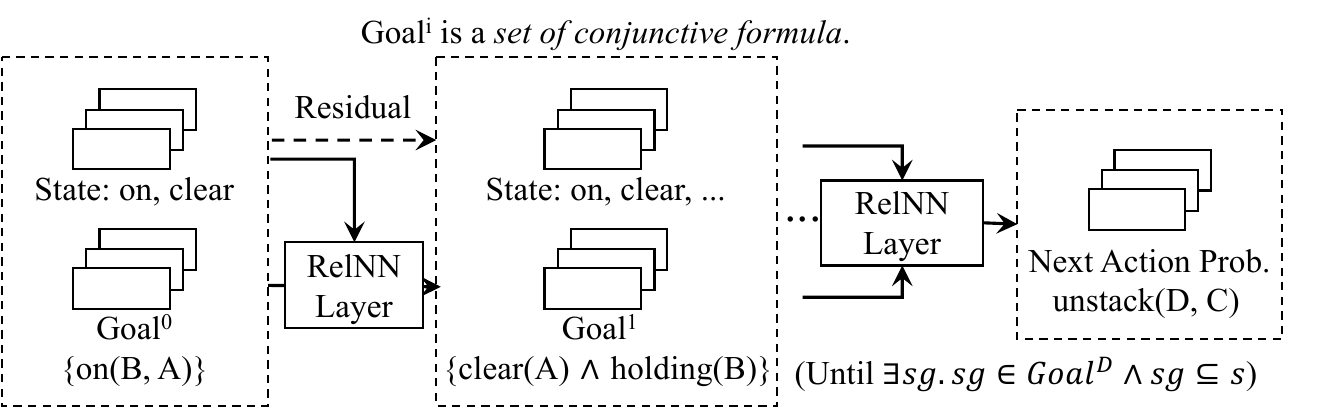}
    \caption{Illustration of the compilation of backward search into a \relnn policy.}
    \label{fig:relnn}
\end{figure}

\begin{proof} There is a proof by construction. As illustrated in \fig{fig:relnn}, a relational neural network can use its intermediate representations to keep track of a set of subgoals (\ie, a set of sets of ground atoms) $\var{Goal}^d$ at each layer $d$. Initially, $\var{Goal}^0$ contains only the goal atom. Then, $\var{sg} \in \var{Goal}^d$ if there is a path of length $d$ from any state $s$ that satisfies $\var{sg} \subseteq s$ to the final goal of the planning problem. For all layers $d$, for any subgoals $\var{sg}$, for all ground actions $a\in \gA$, $\var{sg} \in \var{Goal}^d \wedge \left( \effm(a) \cap \var{sg} = \emptyset\right) \Rightarrow \left( \var{sg}\cup \pre(a) \setminus \effp(a) \right) \in \var{Goal}^{d+1}$. Since the maximum number of atoms in \textsc{bwd} is $k_{\text{BWD}}$, we only need to keep track of ground atom conjunctions that involve at most $k_{\text{BWD}} \cdot \beta$ distinct objects. Therefore, \textsc{bwd} search can be compiled into a relational neural network in $\relnn[O(T), k_{\text{BWD}} \cdot \beta]$.

A more detailed construction is the following. Let $k' = \kbwd \cdot \beta$. Let $x_1, \cdots, x_{k'}$ be variables. $\{q_i(x_1, \cdots, x_{k'})\}$ is the set all possible conjunctive expressions that involve variables $\{x_1, \cdots, x_{k'}\}$. Let $q^{(d)}_i(x_1, \cdots, x_{k'})$ be the value of $q_i(x_1, \cdots, x_{k'})$ at layer $d$, and $\expr(q_i)$ be the conjunctive expression of $q_i$. For all $d$, $q_i$, $q_j$, and for all actions $a$ that are grounded on variables from \{$x_1, \cdots, x_{k'}$\}, if $( \effm(a) \cap \expr(q_i) = \emptyset) \wedge ( \expr(q_j) = \expr(q_i) \cup \pre(a) \setminus \effp(a) )$, there is a logical rule:
$\forall x_1, \cdots, x_{k'}.~q^{(d)}_i(x_1, \cdots, x_{k'}) \Rightarrow q^{(d+1)}_j(x_1, \cdots, x_{k'})$. Furthermore, for all $d$, $q_i$, if $( \effm(a) \cap \expr(q_i) = \emptyset) \wedge ( \expr(q_i) \cup \pre(a) \setminus \effp(a) \subset s )$, where $s$ is the current state, we have the following rule that outputs the next action to take,  $\forall x_1, \cdots, x_{k'}.~q_i^{(d)}(x_1, \cdots, x_{k'}) \Rightarrow a^{(d+1)}$, where $a^{(d)} = 1$ indicates that there exists a length $d$ trajectory for achieving the goal. In practice, we will the shortest one. Note that all these rules are fully quantified by $\forall x_1, \cdots, x_{k'}$ so they are lifted FOL rules and can be implemented by a \relnn circuit.
\end{proof}

\subsection{Proof of Theorem 4.2: Compilation of S-GRS}
\label{sec:supp-grs-compilation}

Given a planning problem $P$ of SOS width $k$, let T be the length of the optimal trajectory, and $\beta$ be the maximum arity of atoms in the domain. The serialized goal regression search S-GRS can be compiled into a $\relnn[O(T), (k+1) \cdot \beta]$, where $k+1 \le k_{\text{BWD}}$.

\begin{proof} There is a proof by construction. The easiest construction is based on the sufficient condition between regression width $k$ and $IW(k)$. In particular, based on Theorem~\ref{thm:iw-sufficiency}, we know that $IW(k+1)$ can solve the planning problem $P$. Therefore, we can use the following construction to simulate $IW(k+1)$.

The most important trick in the construction is that a relational neural network can use its intermediate representations to keep track of two sets of atoms. First, for each atom tuple $t$ up to size $k+1$, whether $t$ is reachable from $s_0$ within $d$ steps. Second, let $s_t$ be the last state of the optimal trajectory associated with $t$; for each atom tuple $t$ up to size $k+1$ and for a predicate $p(x_1, x_2, \cdots, x_\beta)$, we keep track of whether $p(x_1, x_2, \cdots, x_\beta)$ is true in $s_t$. At each layer $d$, we consider all possible ground actions $a \in \gA$, if there exists such $(t_1, t_2)$ such that $t_2$ is not reachable within $d$ steps and $t \subseteq \gT(s_{t_1}, a)$, we set $t_2$ to be reachable within $d+1$ and $s_{t_2} = \gT(s_{t_1}, a)$.

In addition to the realizability of the $IW(k+1)$ algorithm, we need to additionally prove that, if we follow the policy derived by simulating $IW(k+1)$ for the first step, we can recurrently apply this policy. To see that, consider the path from $s_0$ to the goal state in $IW(k+1)$ in the tuple graph. Since we are moving forward by one step in this tuple path, doing a search from the second tuple in the path (\ie, applying the same \relnn policy at the second world state) will yield the same trajectory.

Similar to the construction in Theorem 4.1, all the rules are lifted FOL rules and can be implemented by a \relnn circuit. It is also possible to use the same trick to compile the original S-GRS algorithm: essentially, for each $(g, \var{cons})$ pair, keep track of the optimal trajectory for achieving it. However, the construction will be more complicated and is omitted here.
\end{proof}

\subsection{More Discussion}
\label{sec:supp-policy-discussion}

For some problems, even if we have a regression rule selector, to compute a plan, we may still need an unbounded depth of search steps (\ie, $d$ depends on the number of objects in the domain). For example, in Blocks World, to achieve $\prop{clear}{A}$, the number of steps needed is proportional to the number of objects on top of $\object{A}$. There are two ways to build a relational neural network policy in this case. First, we can allow the network to have a variable number of iterations by applying the same layer recurrently several times. We have already illustrated this in the experiment.

Second, there might be ``shortcuts'' that we can learn. An intuitive example is that in BlocksWorld, the goal regression for $\prop{clear}{A}$ will recur until we have found the topmost block stacked on top of $\object{A}$. If the domain has a new predicate $\func{above}(x, y)$ indicating whether the block $x$ is above another block $y$, then we can use just one single rule to predict the topmost block on top of $\object{A}$: the topmost block $\object{B}$ on $\object{A}$ satisfies $\prop{above}{B, A} \wedge \prop{clear}{B}$.

The same technique also applies to the problem of path-finding in any acyclic maze. Recall that a classical solution to any acyclic maze is that we only make right turns. This can be cast as a linearization of the entire maze by doing a depth-first search from the start or the goal location. In particular, starting from the goal location if we only do right turns, we will get a sequence of the edges of the maze such that the start edge is one of the elements (if there is a path from the start to the goal). Therefore, doing goal regression under this linearized maze is trivial: the entire ``regression tree'' is just a chain of edges and there is a simple shortcut for predicting the next move.

\clearpage

\section{Analysis of AI Planning Problems}
\label{sec:supp-ipcwidth}

Here, we list six popular problems that have been widely used in AI planning communities and discuss their circuit depth and breadth. We summarize our results in the \tbl{tab:ipcwidth2}.

\begin{table}[hp]
\centering
\begin{tabular}{@{}llll@{}}
\toprule
Problem     & Breadth    & Regression Width & Depth    \\
\midrule
BlocksWorld & Constant   & 1                & Unbounded \\
Logistics   & Constant   & 0                & Unbounded \\
Gripper     & Constant   & 0                & Constant  \\
Rover       & Constant   & 0                & Unbounded \\
Elevator    & Constant   & 0                & Unbounded \\
Sokoban     & Unbounded  & N/A (not serializable) & Unbounded \\
\bottomrule
\end{tabular}
\vspace{0.5em}
\caption{Width and circuit complexity analysis for 6 problems widely used in AI planning communities. This table contains the same content as Table~\ref{tab:ipcwidth}; we include it here for easy reference.}
\label{tab:ipcwidth2}
\end{table}

Note that here, we are limiting our discussion to the case where the goal of the planning problem is a single atom. In summary, most of these problems have a constant breadth (\ie, the regression width of the corresponding problem is constant) except for Sokoban. Most problems have an unbounded depth: that is, the depth of the circuit will depend on the number of objects (\eg, the size of the graph or the number of blocks.
The problem gripper has a constant depth under a single-atom goal because it assumes the agent can move directly between any two rooms; therefore, no pathfinding is required. Sokoban has an unbounded breadth even for single-atom goals because if there are multiple boxes blocking the way to a designated position, the order to move the boxes can not be determined by simple rules.
The constant breadth results can be proved by construction: list all regression rules in the domain. In the following Appendix~\ref{sec:supp-ipcwidth-analysis}, we first discuss the SOS width of two representative problems, Blocks World and Logistics. All other problems can be analyzed in a similar way.

Before we delve into concrete complexity proofs, we would like to add the note that for problems in this list, when there are multiple goals, usually the goals are not serializable (in the optimal planning case). This can be possibly addressed by introducing ``super predicates'' that combine two literals. For example, if two goal atoms $p(x)$ and $q(x)$ are not serializable, we can introduce a new predicate $\textit{p\_and\_q}(x)$, and rewrite all operators with this new super predicate. Appendix~\ref{sec:supp-make-serialization} provides more discussion and examples. This will make the problem serializable but at the cost of significantly enlarging the set of predicates (exponentially with respect to the number of goal atoms).

If we only care about satisficing plans, for Logistics, Gripper, Rover, and Elevator, there exists a simple serialization for any conjunction of goal atoms (basically achieving one goal at a time).

These theoretical analyses can be used in determining the size of the policy circuit needed for each problem. For example, one should use a RelNN of breadth $(k + 1) \cdot \beta$, where $k$ is the regression width and $\beta$ is the max arity of predicates. For problems have unbounded depth, usually the depth of the circuit grows in $O(N)$, where $N$ is the number of objects in the environment (\eg, in Elevator, the number of floors).

\newpage

\subsection{SOS Width Analysis}
\label{sec:supp-ipcwidth-analysis}

To prove that a problem can be solved optimally using S-GRS, we only need to show that there exists a goal regression rule selector. Therefore, we will be using the syntax from Section 4 to write down the set of goal regression rule selectors.

\paragraph{Finite SOS width for Blocks World.}
\begin{align*}
   \forall \var{cons}. \forall x. \forall y. \func{on}(y, x) \in s_0 ~~& \gregressfull{\func{clear}(x)}{\var{cons}}{\func{on}(y, x), \func{clear}(y), \func{handsfree}()}{\func{unstack}(y, x)}\\
   \forall \var{cons}. \forall x. \func{holding}(x) \in s_0 ~~& \gregressfull{\func{clear}(x)}{\var{cons}}{\func{holding}(x)}{\func{place-table}(x)}\\
   \forall \var{cons}. \forall x. \forall y. \func{on}(x, y) \in s_0 ~~& \gregressfull{\func{hoding}(x)}{\var{cons}}{\func{on}(x, y), \func{clear}(x), \func{handsfree}()}{\func{unstack}(x, y)}\\
   \forall \var{cons}. \forall x. \func{on-table}(x) \in s_0 ~~& \gregressfull{\func{hoding}(x)}{\var{cons}}{\func{on-table}(x), \func{clear}(x)}{\func{pick-table}(x)}\\
   \forall \var{cons}. \forall x. ~~& \gregressfull{\func{on-table}(x)}{\var{cons}}{\func{holding}(x)}{\func{place-table}(x)}\\
   \forall \var{cons}. \forall x. \forall y. ~~& \gregressfull{\func{on}(x, y)}{\var{cons}}{\func{clear}(y), \func{holding}(x)}{\func{stack}(x, y)}\\
   \forall \var{cons}. \forall x. \func{holding}(x) \in s_0 ~~& \gregressfull{\prop{handsfree}{}}{\var{cons}}{\func{holding}(x)}{\func{place-table}(x)}\\
\end{align*}

\paragraph{Finite SOS width for Logistics.}

\begin{align*}
   \forall \var{cons}. \forall v. \forall o. \forall \ell. ~~& \gregressfull{\func{at}(o, \ell)}{\var{cons}}{\func{in}(o, v), \func{at}(v, \ell)}{\func{unload}(y, x)}\\
   \forall \var{cons}. \forall v. \forall o. \forall \ell. ~~& \gregressfull{\func{in}(o, v)}{\var{cons}}{\func{at}(o, \ell), \func{at}(v, \ell)}{\func{load}(y, x)}\\
   \forall \var{cons}. \forall v. \forall \ell_1. \forall \ell_2. \forall c. ~~& \gregressfull{\func{at}(v, \ell_2)}{\var{cons}}{\func{loc}(\ell_1, c), \func{loc}(\ell_2, c), \func{at}(o, \ell_1)}{\func{drive}(v, \ell_1, \ell_2, c)}\\
   \forall \var{cons}. \forall v. \forall a_1. \forall a_2. ~~& \gregressfull{\func{at}(v, a_2)}{\var{cons}}{\func{at}(v, a_1)}{\func{fly}(v, a_1, a_2)}\\
\end{align*}
Here, we have omitted static properties for type checking (such as $a_1$ and $a_2$ need to be airports).

\clearpage

\section{Experimental Details}
\label{sec:supp-experiment}

\subsection{Training details}

\paragraph{Environment setup.} For the Assembly3 task, we first uniformly sample the set of type-A, type-B, and type-C object. Then, we randomly add {\it match} relationships so that there will be exactly only one tuple of $(a, b, c)$ that satisfies the matching condition.

For the BlocksWorld-Clear task, we use the same random sampler for initial configurations as Neural Logic Machines~\citep{dong2019neural}. It iteratively adds blocks to an empty initial state. At each step, it randomly selects a clear object (or the table), and places a new object on top of it. After generating the initial state, we randomly sample an object that is not clear in the initial state as the target block.

For the Logistics task, we first randomly generate a tree rooted at the start node. This is done by iteratively attaching new nodes to a leaf node. Next, we randomly add a small number of edges to the tree. Finally, we randomly pick a node as the target node. Since we are only a small number of additional edges added to the tree, with high probability, the starting node and the end node are not in the same strongly connected component.

\paragraph{Additional hyperparameters.} The breadth, width, and hidden dimension parameters for neural networks have been specified in the experiment section and they vary across tasks. For all experiments, we use the Adam optimizer to train the neural network, with a learning rate of $\epsilon = 0.001$. Additionally, we set $\beta_1 = 0.9$ and $\beta_2 = 0.999$.

\paragraph{Compute.}
We trained with 1 NVIDIA Titan RTX per experiment for all datasets, from an internal cluster.

\paragraph{Code.} We release the code for reproducing all our experimental results in \url{https://github.com/concepts-ai/goal-regression-width}.

\end{document}